\documentclass[10pt]{article}

\usepackage[preprint]{tmlr}
\renewcommand{\headrulewidth}{0pt}

\usepackage{microtype}
\usepackage{graphicx}
\usepackage{subcaption}
\usepackage{wrapfig}
\usepackage{booktabs}
\usepackage{placeins}
\usepackage{amsmath}
\usepackage{amssymb}
\usepackage{mathtools}
\usepackage{amsthm}
\usepackage[many]{tcolorbox}
\usepackage[plain]{algorithm}
\let\TMLRauthorAND\AND
\makeatletter\let\AND\@undefined\makeatother
\usepackage[noend]{algorithmic}
\usepackage{tikz}
\usetikzlibrary{positioning}
\usepackage{url}
\usepackage{hyperref}
\hypersetup{
  colorlinks=true,
  linkcolor=blue,
  citecolor=blue,
  urlcolor=blue
}
\usepackage[capitalize,noabbrev]{cleveref}
\crefname{assumption}{assumption}{assumptions}
\Crefname{assumption}{Assumption}{Assumptions}

\theoremstyle{plain}
\newtheorem{theorem}{Theorem}[section]

\newtheorem{lemma}[theorem]{Lemma}

\theoremstyle{definition}

\newtheorem{assumption}[theorem]{Assumption}
\theoremstyle{remark}

\title{Minerva: Reinforcement Learning with Verifiable Rewards \\for Cyber Threat Intelligence LLMs}
\author{%
\name Md Tanvirul Alam \email ma8235@rit.edu \\
\addr Rochester Institute of Technology, Rochester, NY, USA
\TMLRauthorAND%
\name Aritran Piplai \email apiplai@utep.edu \\
\addr University of Texas at El Paso, El Paso, TX, USA
\TMLRauthorAND%
\name Ionut Cardei \email icardei@fau.edu \\
\addr Florida Atlantic University, Boca Raton, FL, USA
\TMLRauthorAND%
\name Nidhi Rastogi \email nxrvse@rit.edu \\
\addr Rochester Institute of Technology, Rochester, NY, USA
\TMLRauthorAND%
\name Peter J Worth Jr \email pworth2022@fau.edu \\
\addr Florida Atlantic University, Boca Raton, FL, USA
}

\begin{document}

\maketitle

\begin{abstract}

Cyber threat intelligence (CTI) analysts routinely convert noisy, unstructured security artifacts into standardized, automation-ready representations. Although large language models (LLMs) show promise for this task, existing approaches remain brittle when producing structured CTI outputs and have largely relied on supervised fine-tuning (SFT). In contrast, CTI standards and community-maintained resources define canonical identifiers and schemas that enable deterministic verification of model outputs. We leverage this structure to study reinforcement learning with verifiable rewards (RLVR) for CTI tasks. We introduce \textit{Minerva}, a unified dataset and training pipeline spanning multiple CTI subtasks, each paired with task-specific verifiers that score structured outputs and identifier predictions. To address reward sparsity during rollout, we propose \textit{MinervaRL}, a lightweight self-training mechanism that generates additional verified trajectories and distills them back into the model. Averaged across four backbones and 12 CTI benchmarks, MinervaRL improves the mean score by 15.8 percentage points over the corresponding base models and by 4.3 points over GRPO.

\end{abstract}

\section{Introduction}

Cyber threat intelligence (CTI) supports defensive security workflows by converting heterogeneous security artifacts into shared, machine-readable representations for triage, detection engineering, and incident response~\citep{xu2024llm4security}. In practice, analysts map unstructured inputs such as vulnerability descriptions, detection rules, and incident narratives to standardized frameworks and identifiers, including MITRE ATT\&CK for adversary behavior and vulnerability standards such as CVE, CWE, and CVSS~\citep{mitre_attack,nvd,cwe,cvss_v31}. These standards enable consistent reporting and large-scale exchange through formats and protocols such as STIX and TAXII~\citep{strom2020attack,oasis2021stix,oasis2021taxii}. They also make correctness critical: a CTI system must ground outputs to evolving taxonomies, preserve evidence from noisy text, and produce syntactically valid structured artifacts. Even small identifier, schema, or formatting errors can break downstream automation or propagate incorrect intelligence.

Large language models (LLMs) are a natural fit for CTI because they can read long-form security narratives and generate structured analyst-facing outputs. Prior work shows growing use of LLMs across cybersecurity, and domain-adapted models such as CTI-BERT demonstrate that security-specific pretraining improves CTI-oriented extraction and representation learning~\citep{xu2024llm4security,parkyou2023ctibert}. However, existing CTI benchmarks reveal uneven reliability. Models can often follow analyst-style instructions and recover surface facts, but still fail on workflow-critical outputs such as ATT\&CK technique mapping, mitigation recommendation, and vulnerability root-cause identification~\citep{ji2024sevenllm,alam2024ctibench,liu2024cyberbench}. These failures are especially problematic for deployment, where CTI systems must produce canonical identifiers, valid schemas, and grounded structured predictions, often under constraints that favor smaller specialized models.

A key observation behind this work is that many CTI tasks are directly verifiable. Unlike open-ended preference tasks, CTI outputs often have canonical targets: an ATT\&CK technique ID, a CWE label, a CVSS vector, a mitigation set, or a structured extraction schema. This makes CTI well-suited to reinforcement learning with verifiable rewards (RLVR), where deterministic programmatic verifiers score model outputs without requiring a learned reward model or human preference labels. RLVR has recently improved reasoning and structured generation in LLMs~\citep{shao2024deepseekmath,deepseekr1,wen2025rlvr}, while avoiding the cost and subjectivity of RLHF-style preference collection~\citep{ouyang2022training}. However, standard on-policy RLVR is limited by empirical support: with a small rollout budget, hard prompts may produce no verified-correct completions, yielding little useful learning signal in that iteration~\citep{wu2025invisibleleash}. We find this sparse-reward regime common in CTI, especially for long-tail identifiers and strict structured outputs.

We introduce \textbf{Minerva}, a unified CTI training suite and RLVR pipeline for specializing open-weight LLMs to verifier-checkable CTI workflows. Minerva-CTI contains 16 training tasks spanning three broad families: vulnerability-centric mapping, such as CVE $\rightarrow$ CWE/CVSS/ATT\&CK; detection-centric mapping, such as Sigma, Microsoft Sentinel, and Splunk rules $\rightarrow$ ATT\&CK; and procedure-oriented mapping, such as scenarios or behaviors $\rightarrow$ techniques, tactics, mitigations, or threat actors. All tasks are normalized to canonical target spaces and paired with deterministic verifiers.

To train reliably under sparse verifier feedback, we propose \textbf{MinervaRL}. MinervaRL augments the standard GRPO-based RLVR loop with hardness-gated answer-conditioned rationale generation. When the current policy fails to produce a fully verified rollout for a prompt, MinervaRL temporarily reveals the gold label during training to elicit a short rationale trace, filters the generated candidates for correctness and quality, and distills accepted traces back onto the original answer-free prompt. Thus, the deployed model never receives label hints, while training can seed verified trajectories for prompts that would otherwise remain outside the empirical support of small-budget RLVR.

Our contributions are:
\begin{itemize}
  \item We curate \textbf{Minerva-CTI}, a unified 16-task CTI training suite with deterministic, verifier-checkable targets spanning vulnerability, detection, and procedure-oriented CTI workflows.
  \item We propose \textbf{MinervaRL}, an RLVR extension that mitigates sparse verifier feedback by leveraging hardness-gated, answer-conditioned rationale generation and periodic distillation onto the original answer-free prompts.
  \item We evaluate across 12 CTI benchmarks and four open-weight backbones. MinervaRL improves the mean CTI score by 15.8 percentage points over matched base checkpoints and by 4.3 points over GRPO, while outperforming controlled SFT, rejection-finetuning, and off-policy RLVR baselines on average.
\end{itemize}

\section{Related Work}
\label{sec:related-work}

\paragraph{LLMs for cyber threat intelligence.}
LLMs have increasingly been applied to CTI workflows that require extracting, normalizing, and grounding security-relevant information from unstructured reports. A central task is mapping tactics, techniques, and procedures (TTPs) from natural-language threat reports to MITRE ATT\&CK. TRAM demonstrates an applied LLM pipeline for automated technique identification, motivated by the cost and brittleness of manual ATT\&CK mapping~\citep{ctid2023tram}. A recent systematization of automated TTP extraction methods, including generative LLMs, highlights persistent comparability challenges arising from heterogeneous ontologies, datasets, and evaluation protocols~\citep{buechel2025sok_ttp}. Expert-annotated resources such as AnnoCTR provide more controlled supervision and evaluation for ATT\&CK-labeled CTI text~\citep{lange2024annoctr}. Beyond report-level extraction, LLMs have been used to bootstrap structured CTI artifacts such as knowledge graphs~\citep{hu2024llm_tikg}, and hybrid knowledge-graph/LLM systems have been proposed for producing actionable intelligence from heterogeneous evidence~\citep{DBLP:conf/eurosp/FieblingerAR24}. Other work maps vulnerability descriptions into standardized taxonomies, where prompting alone remains unreliable but instruction templating and fine-tuning can improve grounding~\citep{liu2023not_end_of_story,zhang2024vtt_llm}. Instruction-tuned security models such as CyberPal and CyberPal~2.0 further improve cybersecurity-oriented behavior, while also illustrating the limitations of supervised fine-tuning for robust CTI reasoning and structured output generation~\citep{levi2024cyberpal,levi2025cyberpal2}. Complementary efforts study data-efficient ATT\&CK technique identification, including active learning~\citep{rahman2024alert}, and LLM pipelines for mapping detection artifacts, such as Sigma rules and SIEM analytics, to ATT\&CK via prompt chaining and retrieval~\citep{wudali2025ram}.

\paragraph{CTI and cybersecurity benchmarks.}
A growing set of benchmarks evaluates LLMs on CTI and cybersecurity workflows beyond general NLP tasks. CTIBench and AthenaBench target CTI-specific capabilities such as CVE$\rightarrow$CWE mapping, CVSS prediction, ATT\&CK technique extraction, mitigation recommendation, and threat-actor attribution~\citep{alam2024ctibench,alam2025athenabench}. SEvenLLM introduces a bilingual cybersecurity instruction corpus and benchmark with incident-analysis and response-oriented CTI tasks~\citep{ji2024sevenllm}. Broader cybersecurity evaluations include SECURE, which measures LLM performance across multiple cybersecurity tasks~\citep{DBLP:conf/acsac/BhusalANMLFFTBR24}; CyberMetric, which evaluates broad cybersecurity knowledge through 10{,}000 questions~\citep{cybermetric}; and CyberBench, which aggregates datasets for cybersecurity-language understanding tasks such as entity recognition, summarization, and classification~\citep{liu2024cyberbench}. These benchmarks expose recurring failure modes, including hallucination, mis-grounding, brittle identifier mapping, and schema errors. 

\paragraph{Reinforcement learning with verifiable rewards.}
Reinforcement learning with verifiable rewards (RLVR) has become a prominent post-training approach for tasks where correctness can be checked programmatically, especially mathematical reasoning and code generation~\citep{shao2024deepseekmath,deepseekr1}. Instead of relying on learned preference models, RLVR uses deterministic verifiers to assign rewards, making it attractive for domains with canonical answers and structured outputs. Recent work shows that RLVR can improve reasoning behavior, reliability, and calibration relative to supervised baselines~\citep{wen2025rlvr}, while other studies examine whether RLVR elicits new capabilities or primarily amplifies behaviors already present in the base model~\citep{yue2025does,cheng2025reasoning,wu2025invisibleleash}. Related analyses further suggest that supervised fine-tuning can overfit or memorize solution traces, whereas reinforcement learning may encourage more generalizable behavior under verifiable feedback~\citep{DBLP:conf/icml/ChuZYTXSLL025}. Follow-up work studies how RLVR gains depend on task difficulty, rollout budget, and exploration strategy~\citep{yang2025depth}, and shows that verifiable-reward formulations can transfer beyond math and code to other structured domains~\citep{lu2025generalization,su2025crossing}. Our work brings this paradigm to CTI, where many targets are canonical identifiers, label sets, or structured strings, and addresses the sparse-reward regime through hardness-gated answer-conditioned rationale generation and original-prompt distillation.

\section{Minerva-CTI Dataset}
\label{sec:minerva-cti}

We introduce \textbf{Minerva-CTI}, a unified CTI training suite curated from standards, knowledge bases, and community-maintained security resources, including MITRE ATT\&CK~\citep{mitre_attack,attack_stix_data}, MITRE CAPEC~\citep{capec}, the National Vulnerability Database (NVD)~\citep{nvd,cwe}, Mappings Explorer~\citep{mappings_explorer}, and detection or emulation corpora such as Sigma, Atomic Red Team, Microsoft Sentinel, and Splunk Security Content~\citep{sigma,atomic_red_team,azure_sentinel,splunk_security_content}. From these sources, we define 16 verifier-checkable tasks spanning vulnerability mapping, detection-rule mapping, and procedure-oriented CTI reasoning. Tasks include predicting ATT\&CK techniques, tactics, and mitigations; mapping CVEs to CWE labels and CVSS v3.1 vectors; attributing threat actors based on observed behaviors; and linking CAPEC examples to attack patterns or weaknesses. The dataset contains 32{,}000 training instances and 1{,}200 validation instances; Appendix~\ref{sec:minerva-datasets} provides full task definitions, sources, and split statistics.

Each instance consists of an analyst-style prompt $x$, a canonical target $y^\star$, and task metadata used by the verifier. Targets include single identifiers, unordered identifier sets, and structured strings such as CVSS v3.1 vectors. During construction, we normalize labels against fixed taxonomy snapshots, map aliases to canonical forms when applicable, and deduplicate set-valued targets before scoring. This yields stable verifier targets despite heterogeneous surface forms across CTI sources. Minerva-CTI is intentionally heterogeneous: examples are allocated per task so that low-resource mappings remain represented alongside larger CVE- and scenario-derived tasks. The held-out Minerva validation split is used for checkpoint selection and diagnostics, while the 12-task suite in \Cref{sec:eval-benchmarks} evaluates both training-aligned performance and transfer to benchmarks not used as Minerva-CTI training objectives.
\section{Methodology}

\subsection{Reinforcement Learning with Verifiable Rewards}
\label{sec:rlvr}

We train CTI models using reinforcement learning with verifiable rewards (RLVR). Each training task is paired with a deterministic programmatic verifier that evaluates a completion $y$ for a prompt $x$ against a task-specific target. The verifier returns a scalar reward $r(x,y)\in[0,1]$, instantiated as exact identifier matching for single-label outputs, structured partial credit for hierarchical labels such as ATT\&CK techniques and sub-techniques, or set-based overlap for multi-label targets such as tactics, mitigations, and CWE sets. This formulation is well suited to CTI because many analyst-facing outputs are structured, canonicalized, and automatically auditable. It therefore enables scalable optimization without a learned reward model or human preference labels. Appendix~\ref{sec:minerva-rewards} describes answer extraction, normalization, and task-specific verification. Unless otherwise stated, we do not add a separate format reward.

We optimize the policy $\pi_\theta$ with Group Relative Policy Optimization (GRPO)~\citep{shao2024deepseekmath,deepseekr1}, a PPO-style objective~\citep{schulman2017ppo} that avoids training a value critic by estimating advantages from multiple completions sampled for the same prompt. For each prompt $x$, we sample a group of $G$ completions $\{y_i\}_{i=1}^{G}$ from $\pi_{\theta_{\mathrm{old}}}(\cdot\mid x)$, score them with the verifier to obtain rewards $r_i=r(x,y_i)$, and compute group-normalized advantages
\begin{equation}
    \hat{A}_i
    =
    \frac{r_i - \operatorname{mean}(\{r_j\}_{j=1}^{G})}
    {\operatorname{std}(\{r_j\}_{j=1}^{G}) + \epsilon}.
\end{equation}
The policy is then updated with the clipped GRPO surrogate
\begin{equation}
\label{eq:grpo-objective}
    \mathcal{L}_{\mathrm{GRPO}}(\theta)
    =
    - \mathbb{E}
    \left[
    \frac{1}{G}\sum_{i=1}^{G}
    \min\left(
        \rho_i(\theta)\hat{A}_i,\,
        \operatorname{clip}\!\left(\rho_i(\theta),1-\varepsilon,1+\varepsilon\right)\hat{A}_i
    \right)
    \right],
    \quad
    \rho_i(\theta)=
    \frac{\pi_\theta(y_i\mid x)}
    {\pi_{\theta_{\mathrm{old}}}(y_i\mid x)} .
\end{equation}
This critic-free objective uses only verifier scores and within-prompt relative comparisons, making it a natural optimizer for structured CTI tasks with automatically checkable outputs.

\subsection{MinervaRL}
\label{sec:minervarl}

Minerva-CTI tasks are verifier-checkable, but they are not uniformly easy for on-policy RLVR. Many tasks require selecting an exact identifier, or a small set of identifiers, from a large, finite, and long-tailed label space. For example, our January 2026 snapshots contain 944 CWE identifiers and 44 MITRE Enterprise mitigation identifiers, with many labels appearing rarely in public pretraining data compared to more common schemas, such as ATT\&CK techniques. Under a small rollout budget, the policy may therefore fail to sample any fully correct completion for hard prompts. In such cases, all completions in the group receive either no reward or only partial reward, and the resulting GRPO update provides little signal about the correct structured answer. This reward-sparsity regime motivates an auxiliary mechanism that can seed verified trajectories for hard prompts while preserving the original answer-free inference setting.

\begin{figure}[!t]
  \centering
  \includegraphics[width=0.6\linewidth]{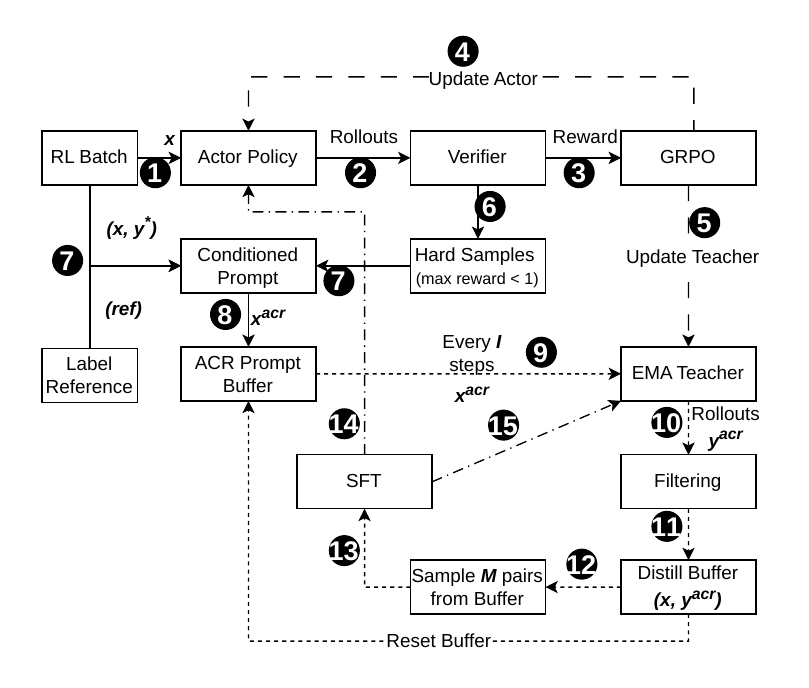}
  \caption{MinervaRL overview. Numbered markers indicate the execution order; dashed arrows denote conditional or periodic steps.}
  \label{fig:minervarl}
\end{figure}

MinervaRL augments the standard GRPO-based RLVR loop with hardness-gated answer-conditioned self-training. The central idea is simple: when the current policy cannot solve a prompt within the available rollout budget, we temporarily reveal the ground-truth label during training to elicit a short rationale that justifies it. We call the resulting trace an \emph{answer-conditioned rationale} (ACR). Importantly, the label-revealing prompt is used only to generate candidate training traces. Before distillation, each candidate is checked by the task verifier and filtered for leakage and rationale quality. Accepted traces are then distilled back into the actor using the original task prompt, without any label hint. Thus, MinervaRL uses labels to construct additional verified supervision during training, but the deployed model is always queried in the same answer-free format as the GRPO baseline.

The procedure runs on two coupled timescales. At every training step, the actor performs the ordinary on-policy GRPO update on verifier-scored rollouts from the original prompts, so the primary RLVR objective is unchanged. In parallel, MinervaRL identifies hard prompts whose rollout group contains no fully verified completion, i.e., prompts with maximum verifier reward below $1$. These prompts are added to an ACR buffer $\mathcal{P}$ together with a label-conditioned generation prompt. Every $I$ steps, an exponential moving average (EMA) teacher samples ACR candidates for the buffered prompts. Candidates that are verifier-correct and pass the filtering pipeline are stored in a distillation buffer $\mathcal{Q}$ as pairs of the original prompt and the accepted trace. The actor then receives a lightweight supervised update on samples from $\mathcal{Q}$, after which the buffers are reset. In this way, MinervaRL preserves on-policy RLVR as the main training signal while periodically converting otherwise signal-poor hard prompts into verified original-prompt training examples.

\paragraph{Notation.}
Let $\mathcal{D}=\{(x_i,y_i^\star)\}$ be the Minerva-CTI training set, where $x_i$ is the original answer-free prompt and $y_i^\star$ is the ground-truth structured target. The actor is $\pi_\theta$, the EMA teacher is $\pi_\phi$, and the verifier
$R_{\textsc{minerva}}(x,y,y^\star)\in[0,1]$
extracts the final answer from completion $y$ and scores it against $y^\star$.

\paragraph{Step 1 (\Cref{alg:minervarl}): RLVR rollouts and GRPO update.}
At step $t$, we sample a batch $\mathcal{B}_t\subset\mathcal{D}$. For each $(x_i,y_i^\star)\in\mathcal{B}_t$, the actor samples $N=8$ completions from the original prompt,
\begin{equation}
    y_{i,j}^{\mathrm{rlvr}} \sim \pi_{\theta_t}(\cdot\mid x_i),
    \qquad j=1,\ldots,N ,
\end{equation}
which are scored by the verifier:
\begin{equation}
    r_{i,j}^{\mathrm{base}}
    =
    R_{\textsc{minerva}}\!\left(x_i,y_{i,j}^{\mathrm{rlvr}},y_i^\star\right).
\end{equation}
The actor is updated with GRPO using only these on-policy, original-prompt trajectories, matching the standard GRPO baseline.

\begin{algorithm}[!t]
\refstepcounter{algorithm}
\label{alg:minervarl}
\centering
\begin{minipage}{0.93\linewidth}
\noindent\makebox[\linewidth][l]{\rule{0.86\linewidth}{0.8pt}}\par
\vspace{2pt}
{\normalsize\textbf{Algorithm~\thealgorithm} MinervaRL}\par
\vspace{2pt}
\noindent\makebox[\linewidth][l]{\rule{0.86\linewidth}{0.4pt}}\par
\vspace{3pt}
\footnotesize
\begin{algorithmic}[1]
\REQUIRE $\mathcal{D}=\{(x_i,y_i^\star)\}$; actor $\pi_\theta$; verifier $R_{\textsc{minerva}}$; rollouts $N,K$; interval $I$; EMA decay $\alpha$; distill cap $M$
\ENSURE Trained actor $\pi_\theta$
\STATE \textbf{Initialize:} $\phi\gets\theta$; $\mathcal{P},\mathcal{Q}\gets\emptyset$
\FOR{training step $t=1,2,\ldots$}
  \STATE \textbf{Step 1: RLVR rollouts and GRPO update}
  \STATE $\mathcal{B}_t\gets\textsc{SampleBatch}(\mathcal{D})$
  \FORALL{$(x_i,y_i^\star)\in\mathcal{B}_t$}
    \STATE $Y_i^{\mathrm{rlvr}}\gets\{y_{i,j}^{\mathrm{rlvr}}\sim\pi_\theta(\cdot\mid x_i)\}_{j=1}^{N}$
    \STATE $r_{i,j}^{\mathrm{base}}\gets R_{\textsc{minerva}}(x_i,y_{i,j}^{\mathrm{rlvr}},y_i^\star)$ for all $j$;\quad
    $m_i\gets\max_j r_{i,j}^{\mathrm{base}}$
  \ENDFOR
  \STATE $\theta\gets\textsc{GRPO}\!\left(\theta,\mathcal{B}_t,\{Y_i^{\mathrm{rlvr}},r_i^{\mathrm{base}}\}_{i}\right)$;\quad
  $\phi\gets\alpha\phi+(1-\alpha)\theta$

  \STATE \textbf{Step 2: Buffer hard prompts for ACR}
  \STATE $\mathcal{P}\gets\mathcal{P}\cup
  \{(x_i,\,x_i\oplus b(y_i^\star,d_i),\,y_i^\star):
  (x_i,y_i^\star)\in\mathcal{B}_t,\ m_i<1,\ \mathrm{task}(x_i)\ne\mathrm{CVSS}\}$

  \IF{$t\bmod I=0$}
    \STATE \textbf{Step 3: ACR generation and filtering}
    \FORALL{$(x_i,x_i^{\mathrm{acr}},y_i^\star)\in\mathcal{P}$}
      \STATE $Y_i^{\mathrm{acr}}\gets\{y_{i,k}^{\mathrm{acr}}\sim\pi_\phi(\cdot\mid x_i^{\mathrm{acr}})\}_{k=1}^{K}$
      \STATE $\mathcal{E}_i\gets
      \{k:
      R_{\textsc{minerva}}(x_i,y_{i,k}^{\mathrm{acr}},y_i^\star)=1
      \wedge f_{\mathrm{heur}}(y_{i,k}^{\mathrm{acr}})=1
      \wedge q_{i,k}\ge\tau_q\}$
      \IF{$\mathcal{E}_i\ne\emptyset$}
        \STATE $k^\star\gets\arg\max_{k\in\mathcal{E}_i}q_{i,k}$;\quad
        $\mathcal{Q}\gets\mathcal{Q}\cup\{(x_i,y_{i,k^\star}^{\mathrm{acr}})\}$
      \ENDIF
    \ENDFOR

    \STATE \textbf{Step 4: Distill accepted traces on original prompts}
    \STATE $\mathcal{S}\gets\textsc{Sample}(\mathcal{Q},M)$;\quad
    $\theta\gets\textsc{SFT}\!\left(\theta,\mathcal{S},\gamma\,\mathrm{lr}_{\mathrm{rlvr}}\right)$;\quad
    $\mathcal{P},\mathcal{Q}\gets\emptyset$
  \ENDIF
\ENDFOR
\end{algorithmic}
\vspace{2pt}
\noindent\makebox[\linewidth][l]{\rule{0.86\linewidth}{0.8pt}}\par
\end{minipage}
\end{algorithm}

\paragraph{Step 2 (\Cref{alg:minervarl}): Hard-prompt buffering.}
For each prompt, compute the best rollout reward
\begin{equation}
    m_i=\max_{j\in\{1,\ldots,N\}} r_{i,j}^{\mathrm{base}} .
\end{equation}
We call $x_i$ hard at step $t$ if $m_i<1$, i.e., none of the sampled completions is fully verified. For each hard prompt, excluding CVSS, we construct an answer-conditioned rationale prompt
\begin{equation}
    x_i^{\mathrm{acr}} = x_i \oplus b(y_i^\star,d_i),
\end{equation}
where $b(\cdot)$ reveals the gold target and asks for a short rationale, and $d_i$ optionally contains truncated canonical reference details. We add $(x_i,x_i^{\mathrm{acr}},y_i^\star)$ to the ACR buffer $\mathcal{P}$. This gate is dynamic: a prompt stops entering $\mathcal{P}$ once the actor begins producing fully verified rollouts. We exclude CVSS because its verifier already provides dense graded feedback through CVSS base-score distance, unlike long-tail identifier tasks where incorrect IDs often yield no useful signal.

\paragraph{Step 3 (\Cref{alg:minervarl}): ACR generation and filtering.}
Every $I=10$ steps, the EMA teacher samples $K=4$ ACR candidates for each buffered prompt:
\begin{equation}
    y_{i,k}^{\mathrm{acr}}
    \sim
    \pi_{\phi_t}(\cdot\mid x_i^{\mathrm{acr}}),
    \qquad k=1,\ldots,K ,
\end{equation}
using temperature $0.7$ and nucleus sampling with $p=0.9$. We retain only candidates that are verifier-correct and pass the filtering pipeline in Appendix~\ref{app:filtering}, which removes leakage, degeneration, insufficiently grounded rationales, and low-quality traces. The eligible set is
\begin{equation}
    \mathcal{E}_i
    =
    \left\{
    k :
    R_{\textsc{minerva}}\!\left(x_i,y_{i,k}^{\mathrm{acr}},y_i^\star\right)=1
    \ \wedge\
    \mathrm{Filter}\!\left(y_{i,k}^{\mathrm{acr}}\right)=1
    \ \wedge\
    q_{i,k}\ge\tau_q
    \right\},
\end{equation}
where $q_{i,k}$ is the TextCNN \textsc{good} probability and $\tau_q=0.5$. If $\mathcal{E}_i\neq\varnothing$, we select
\begin{equation}
    k^\star=\arg\max_{k\in\mathcal{E}_i} q_{i,k}
\end{equation}
and enqueue the original-prompt distillation pair $(x_i,y_{i,k^\star}^{\mathrm{acr}})$ into $\mathcal{Q}$. Keeping at most one trace per prompt per interval prevents repeated generations from dominating the supervised update.

\paragraph{Step 4 (\Cref{alg:minervarl}): Original-prompt distillation.}
At each distillation interval, we sample up to $M=256$ pairs $\mathcal{S}\subseteq\mathcal{Q}$ and apply one supervised update:
\begin{equation}
    \mathcal{L}_{\mathrm{sft}}(\theta)
    =
    -
    \mathbb{E}_{(x,y)\sim\mathcal{S}}
    \left[
    \log \pi_\theta(y\mid x)
    \right].
\end{equation}
The update uses learning rate $\gamma\cdot\mathrm{lr}_{\mathrm{rlvr}}$ with $\gamma=0.05$, after which $\mathcal{P}$ and $\mathcal{Q}$ are reset. Because distillation is performed on the original prompt $x_i$, not on $x_i^{\mathrm{acr}}$, the actor learns from verified traces without relying on label hints at inference time.

\subsection{Expanding Empirical Support with MinervaRL}
\label{sec:minervarl-support-bridging}

Let $a^\star$ denote the correct structured answer for prompt $x$, and let
\[
    p_\theta(a^\star\mid x)
    :=
    \Pr_{y\sim\pi_\theta(\cdot\mid x)}[g(y)=a^\star],
\]
where $g(\cdot)$ is the task-specific answer extractor. With $k$ independent rollouts, the probability that the rollout group contains no verifier-correct completion is
\begin{equation}
    \Pr[\text{no success in $k$ rollouts}]
    =
    \left(1-p_\theta(a^\star\mid x)\right)^k .
\end{equation}
For a failure tolerance $\zeta\in(0,1)$, define the detectability threshold
\begin{equation}
    \varepsilon_{k,\zeta}
    :=
    1-\zeta^{1/k}
    \approx
    \frac{-\log \zeta}{k}.
\end{equation}
If $p_\theta(a^\star\mid x)<\varepsilon_{k,\zeta}$, then the correct answer is not reliably observed under the available rollout budget, so the prompt may repeatedly produce all-zero rollout groups and provide little direct positive signal to GRPO.

MinervaRL addresses this finite-sampling barrier by temporarily using the gold label during training to seed a verified trajectory for hard prompts. For a hard prompt, the answer-conditioned prompt $x^{\mathrm{acr}}=x\oplus b(y^\star,d)$ makes it more likely that the EMA teacher will generate a verifier-correct trace. After verifier and quality filtering, the accepted trace is distilled back onto the original prompt $x$, without any label hint. This supervised step aims to increase $p_\theta(a^\star\mid x)$ under the original prompt so that future RLVR rollouts can observe verifier-correct completions under the same budget $k$. Thus, MinervaRL does not change the inference-time task or require label hints at deployment; it expands the set of prompts for which verifier feedback is empirically observable during training. A detailed formalization of this support-expansion view, including assumptions, theorem statements, and proof sketches, is provided in \Cref{app:minervarl-support-bridging}.

\section{Experiments}

\subsection{Evaluation Benchmarks}
\label{sec:eval-benchmarks}

We evaluate on the 12-task suite in \Cref{tab:main-results}, covering multiple-choice cybersecurity and CTI knowledge, structured taxonomy prediction, SOC-style reasoning, and information extraction. The suite includes \textbf{CKT}~\citep{alam2025athenabench}, a five-option CTI knowledge QA benchmark; \textbf{CyberMetric}~\citep{cybermetric}, a four-option cybersecurity knowledge QA benchmark; \textbf{SOCEval}~\citep{deason2025cybersoceval}, multi-select SOC-style reasoning over threat-intelligence reports; \textbf{RCM}~\citep{alam2025athenabench}, CVE-to-CWE root-cause mapping; \textbf{VSP}~\citep{alam2025athenabench}, CVSS v3.1 base-vector prediction; \textbf{ATE}~\citep{alam2025athenabench}, ATT\&CK technique identification from attack scenarios; \textbf{RMS}~\citep{alam2025athenabench}, ATT\&CK mitigation recommendation; \textbf{ElasticRule}~\citep{elastic_detection_rules}, Elastic detection-rule to ATT\&CK technique mapping; \textbf{APTNER}~\citep{wang2022aptner}, APT-focused named-entity recognition; \textbf{LANCE}~\citep{froudakis2025revealing}, IoC identification over IP, URL, domain, and hash candidates; \textbf{AnnoCTR}~\citep{lange2024annoctr}, STIX-style entity and relation extraction; and \textbf{AZERG}~\citep{lekssays2025text}, STIX-style entity and relation extraction from CTI text. Full task definitions, output formats, and sample counts are provided in Appendix~\ref{app:eval-datasets}.

\paragraph{Contamination and overlap considerations.}
Because CTI benchmarks often draw on shared public standards and threat intelligence resources, fully eliminating train-test overlap is difficult. We therefore design Minerva-CTI to minimize direct leakage where possible. We exclude multiple-choice formats during training to reduce overlap with CKT and CyberMetric. SOCEval is derived from threat-intelligence reports that are not used as training targets. For CVE-based tasks, Minerva-CTI uses pre-2025 CVEs, while AthenaBench RCM and VSP evaluations emphasize 2025-era entries. For ATT\&CK scenario tasks, Minerva-CTI uses ATT\&CK-derived training scenarios, whereas ATE and RMS evaluation use independently curated model-generated scenarios conditioned on technique descriptions. Finally, ElasticRule, APTNER, LANCE, AnnoCTR, and AZERG are used only for evaluation and are not included as supervised training objectives.

\subsection{Experimental Settings}
\label{sec:exp-settings}

\paragraph{RLVR and MinervaRL training.}
All RLVR models use GRPO with batch size 128, $N=8$ on-policy rollouts per prompt, 2048-token prompt truncation, 1024-token response truncation, actor learning rate $1\times10^{-6}$, and 500 training steps. Checkpoints are selected by average performance on Minerva-Dev and AthenaBench-Mini~\citep{alam2025athenabench}. MinervaRL uses the same GRPO loop, but for hard prompts with no fully verified rollout, an EMA teacher ($\alpha=0.995$) samples $K=4$ ACR traces at temperature $0.7$ and nucleus sampling $p=0.9$ using a 4096-token ACR context. Every $I=10$ steps, up to $M=256$ accepted traces are distilled onto the original answer-free prompts with learning rate $\gamma\cdot\mathrm{lr}_{\mathrm{rlvr}}$, where $\gamma=0.05$. Full implementation details are in Appendix~\ref{app:training-settings}.

\paragraph{LLM backbones.}
We evaluate four open-weight backbones: \textbf{Llama-3.1-8B-Instruct}~\citep{llama31_8b_instruct}, \textbf{Llama-3.2-3B-Instruct}~\citep{llama32_3b_instruct}, \textbf{Qwen3-4B-Base}~\citep{qwen3_4b_base}, and \textbf{Qwen3-8B-Base}~\citep{qwen3_8b_base}. For each backbone, we report the unadapted base model, a GRPO-trained RLVR model, and a MinervaRL model trained with the same GRPO setup plus hardness-gated ACR generation and original-prompt distillation.

\paragraph{Baselines.}
We compare against three controlled baselines using the same Minerva-CTI training split and verifiers: \textbf{STaR-CTI}, which iteratively bootstraps verifier-correct traces and retrains with SFT~\citep{NEURIPS2022_639a9a17}; \textbf{DART-CTI}, a rejection-finetuning baseline with a fixed verifier-filtered trace corpus~\citep{NEURIPS2024_0ef1afa0}; and \textbf{LUFFY-CTI}, an off-policy RLVR baseline using one accepted guidance trace per prompt~\citep{yan2025luffy}. We also include two external Llama-3.1-8B security-SFT reference models, \textbf{Llama-Primus-Merged} and \textbf{Foundation-Sec-8B-Instruct}~\citep{llama_primus_merged,foundation_sec_8b_instruct}. Baseline construction details are provided in Appendix~\ref{app:additional-baselines}.

\paragraph{Evaluation metrics.}
We use each benchmark's official or task-specific metric. Multiple-choice tasks use exact-match accuracy; taxonomy and mapping tasks use exact match on extracted ATT\&CK or CWE labels; APTNER uses micro-F1 over JSON entities; RMS uses multi-label F1 over mitigation sets; and LANCE, AnnoCTR, and AZERG report averages over subtasks. For VSP, we follow the benchmark verifier and report the normalized CVSS score $1-\mathrm{MAD}/7.7$, where $\mathrm{MAD}$ is the mean absolute deviation between predicted and gold CVSS v3.1 base scores.

\begin{table*}[!t]
\caption{Benchmark results across 12 CTI evaluation suites. We compare MinervaRL with matched base models, GRPO, public security-SFT baselines, and STaR-CTI, DART-CTI, and LUFFY-CTI baselines. Bold indicates the best score within each backbone group.}
\label{tab:main-results}
\centering
\scriptsize
\setlength{\tabcolsep}{3pt}
\renewcommand{\arraystretch}{1.1}
\resizebox{\textwidth}{!}{%
\begin{tabular}{lccccccccccccc}
\toprule
Model & CKT & CyberMetric & SOCEval & RCM & VSP & ATE & RMS & ElasticRule & APTNER & LANCE & AnnoCTR & AZERG & Avg. \\
\midrule
Llama-3.1-8B-Instruct & 67.6 & 83.2 & 64.8 & 48.4 & 76.0 & 17.4 & 6.7 & 14.4 & 33.5 & 78.2 & 50.7 & 42.8 & 48.6 \\
Llama-Primus-Merged & 76.5 & \textbf{85.9} & \textbf{68.0} & 56.0 & 72.6 & 33.6 & 7.8 & 27.3 & 22.0 & 59.4 & \textbf{51.8} & 40.9 & 50.2 \\
Foundation-Sec-8B-Instruct & \textbf{77.1} & 81.3 & 67.9 & 61.0 & 65.8 & 39.2 & 15.4 & 33.6 & 34.0 & 59.5 & 43.5 & \textbf{44.4} & 51.9 \\
Llama-3.1-8B-STaR-CTI & 70.0 & 81.8 & 61.7 & 57.4 & 73.8 & 29.2 & 9.6 & 21.3 & 33.1 & 80.0 & 46.5 & 32.1 & 49.7 \\
Llama-3.1-8B-DART-CTI & 71.3 & 82.4 & 61.5 & 64.0 & 73.0 & 37.8 & 27.7 & 31.0 & 34.0 & 74.8 & 47.1 & 39.3 & 53.7 \\
Llama-3.1-8B-GRPO & 71.9 & 85.4 & 63.0 & 66.3 & 82.6 & 32.0 & 30.9 & 32.2 & 32.7 & \textbf{86.2} & 49.5 & 39.5 & 56.0 \\
Llama-3.1-8B-LUFFY-CTI & 73.9 & 83.7 & 63.3 & 63.8 & 79.6 & 40.6 & 34.3 & 33.8 & \textbf{35.4} & 82.4 & 49.7 & 43.1 & 57.0 \\
Llama-3.1-8B-MinervaRL & 73.9 & 84.2 & 64.7 & \textbf{68.8} & \textbf{87.6} & \textbf{48.4} & \textbf{42.1} & \textbf{40.5} & 34.1 & 84.6 & 50.3 & 43.7 & \textbf{60.2} \\
\midrule
Llama-3.2-3B-Instruct & 71.6 & 77.0 & 55.7 & 15.2 & 2.8 & 1.0 & 0.4 & 1.2 & 25.4 & 77.8 & 35.9 & 32.8 & 33.1 \\
Llama-3.2-3B-STaR-CTI & 64.3 & 70.5 & 50.3 & 36.4 & 53.4 & 4.2 & 7.0 & 1.9 & 21.4 & 76.3 & 33.8 & 28.0 & 37.3 \\
Llama-3.2-3B-DART-CTI & \textbf{73.0} & 78.2 & 54.5 & 54.9 & 61.2 & 16.8 & 17.8 & 11.6 & 22.5 & 73.8 & 38.7 & 25.7 & 44.0 \\
Llama-3.2-3B-GRPO & 71.2 & 77.8 & \textbf{58.3} & 48.9 & 56.5 & 5.2 & 15.4 & 5.3 & \textbf{27.5} & 70.6 & 38.5 & 29.7 & 42.1 \\
Llama-3.2-3B-LUFFY-CTI & 70.6 & \textbf{78.5} & 55.4 & 43.8 & 71.6 & 19.4 & 24.2 & \textbf{20.4} & 15.0 & 68.0 & \textbf{46.0} & 32.1 & 45.4 \\
Llama-3.2-3B-MinervaRL & 71.0 & 78.1 & 54.6 & \textbf{57.2} & \textbf{77.1} & \textbf{21.8} & \textbf{29.3} & 20.1 & 16.7 & \textbf{82.2} & 37.9 & \textbf{33.7} & \textbf{48.3} \\
\midrule
Qwen3-8B-Base & 71.7 & 69.3 & 63.8 & 52.6 & 69.0 & 13.4 & 6.5 & 9.0 & 31.3 & 45.1 & 9.8 & 9.8 & 37.6 \\
Qwen3-8B-STaR-CTI & 37.9 & 49.3 & 65.1 & 36.8 & 68.7 & 15.6 & 3.5 & 4.9 & \textbf{37.9} & 55.7 & 40.0 & 26.4 & 36.8 \\
Qwen3-8B-DART-CTI & 39.2 & 55.7 & 65.2 & 47.1 & 72.7 & 14.0 & 6.5 & 16.4 & \textbf{37.9} & \textbf{76.4} & \textbf{46.5} & \textbf{36.6} & 42.9 \\
Qwen3-8B-GRPO & 69.8 & 72.7 & \textbf{69.0} & 60.5 & 68.8 & 23.6 & 8.2 & 18.1 & 37.7 & 66.1 & 37.8 & 31.1 & 47.0 \\
Qwen3-8B-LUFFY-CTI & \textbf{78.6} & \textbf{88.3} & 64.8 & \textbf{65.3} & 73.2 & 29.4 & \textbf{25.8} & \textbf{32.4} & 34.3 & 74.5 & 45.1 & 26.8 & 53.2 \\
Qwen3-8B-MinervaRL & 77.8 & 88.2 & 67.5 & 64.8 & \textbf{79.4} & \textbf{32.0} & 20.1 & 22.0 & 37.4 & 74.9 & 43.0 & 33.1 & \textbf{53.4} \\
\midrule
Qwen3-4B-Base & 45.6 & 50.1 & 56.1 & 45.6 & 76.5 & 4.2 & 6.5 & 4.6 & 1.2 & 18.0 & 16.6 & 6.5 & 27.6 \\
Qwen3-4B-STaR-CTI & \textbf{76.1} & \textbf{88.2} & \textbf{66.1} & 21.1 & 80.2 & 5.8 & 6.6 & 12.3 & 31.8 & \textbf{62.0} & 46.9 & \textbf{47.7} & 45.4 \\
Qwen3-4B-DART-CTI & 45.4 & 42.1 & 60.0 & 54.6 & 73.8 & 20.6 & 5.8 & 17.1 & 27.6 & 47.5 & 43.2 & 35.1 & 39.4 \\
Qwen3-4B-GRPO & 74.2 & 85.8 & 63.9 & \textbf{60.8} & \textbf{88.1} & 20.2 & 3.8 & 20.4 & \textbf{32.4} & 58.2 & 39.6 & 24.0 & 47.6 \\
Qwen3-4B-LUFFY-CTI & 54.1 & 56.5 & 60.9 & 59.9 & 72.6 & 17.6 & \textbf{10.3} & 23.4 & 29.2 & 60.4 & 46.7 & 36.2 & 44.0 \\
Qwen3-4B-MinervaRL & 70.0 & 80.0 & 64.0 & 59.9 & 80.0 & \textbf{25.4} & 5.1 & \textbf{27.5} & 28.0 & 56.7 & \textbf{50.4} & 29.1 & \textbf{48.0} \\
\bottomrule
\end{tabular}%
}
\end{table*}

\section{CTI Benchmark Results}
\label{sec:cti-results}

\subsection{Main Results Across 12 CTI Tasks}
\label{sec:main-results}

\paragraph{Verifier rewards improve all backbones.}
\Cref{tab:main-results} reports results across 12 CTI evaluation tasks and four open-weight backbones. Standard GRPO improves the average score over each matched base model: Llama-3.1-8B (48.6$\rightarrow$56.0), Llama-3.2-3B (33.1$\rightarrow$42.1), Qwen3-8B (37.6$\rightarrow$47.0), and Qwen3-4B (27.6$\rightarrow$47.6). Averaged across backbones, GRPO raises the mean score from 36.7 to 48.2, showing that deterministic CTI verifiers provide an effective optimization signal for structured CTI generation without a learned reward model.

\paragraph{MinervaRL adds consistent gains over GRPO.}
MinervaRL achieves the best average score within every backbone group and increases the four-backbone mean from 48.2 with GRPO to 52.5. The gains over GRPO are +4.2 points for Llama-3.1-8B, +6.2 for Llama-3.2-3B, +6.4 for Qwen3-8B, and +0.4 for Qwen3-4B. Improvements are especially strong on verifier-aligned structured tasks such as CWE mapping, CVSS prediction, ATT\&CK technique identification, mitigation recommendation, and rule-to-technique mapping. For example, on Llama-3.1-8B, MinervaRL improves RCM, VSP, ATE, RMS, and ElasticRule over the base model by +20.4, +11.6, +31.0, +35.4, and +26.1 points, respectively.

\paragraph{MinervaRL outperforms SFT, rejection-finetuning, and off-policy RLVR baselines.}
MinervaRL also achieves the highest average score among the controlled training baselines for all four primary backbones. Compared with LUFFY-CTI, the strongest off-policy RLVR baseline, MinervaRL improves Avg. by +3.2 on Llama-3.1-8B, +2.9 on Llama-3.2-3B, +0.2 on Qwen3-8B, and +4.0 on Qwen3-4B. It also outperforms STaR-CTI and DART-CTI on average for every backbone, indicating that the gains are not explained by verifier-filtered supervised traces alone. On Llama-3.1-8B, MinervaRL further exceeds external security-SFT reference models, reaching 60.2 Avg. compared with 50.2 for Llama-Primus-Merged and 51.9 for Foundation-Sec-8B-Instruct.

\subsection{Response-Quality Preference Evaluation}
\label{sec:qualitative}

Verifier-based scores measure structured CTI correctness, but analyst-facing usefulness also depends on readability, evidence use, and CTI concept precision. We therefore run a blind GPT-5.2 response-quality evaluation within each backbone family, comparing the base model, GRPO, MinervaRL, STaR-CTI, and DART-CTI. Responses are scored with a three-criterion rubric covering writing quality, prompt-evidence use, and CTI concept use; pairwise preferences are derived from total scores, with ties retained. Full prompts, rubrics, and validation details are in \Cref{app:response-quality-eval}.

\begin{figure}[!t]
  \centering
  \includegraphics[width=.9\linewidth]{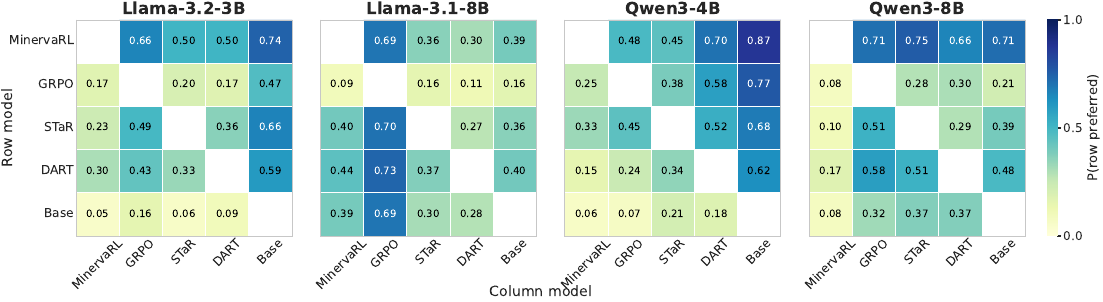}
  \caption{GPT-5.2 pairwise response-quality preferences for backbone-specific model groups. Each cell reports the probability that the row model is preferred over the column model, with ties retained in the denominator.}
  \label{fig:cti-judge-preference}
\end{figure}

\paragraph{Preference results.}
\Cref{fig:cti-judge-preference} shows that MinervaRL is preferred over GRPO for all four backbones and is top-ranked for Llama-3.2-3B, Qwen3-4B, and Qwen3-8B. On Llama-3.1-8B, STaR-CTI and DART-CTI receive higher preference rates, suggesting that fixed-trace SFT and rejection-finetuning baselines can produce stronger prose for a strong instruction-tuned backbone. MinervaRL nevertheless remains preferred over the base and GRPO variants and achieves the highest average task score in \Cref{tab:main-results}, indicating that its verifier-score gains generally coincide with improved judged response quality relative to standard GRPO.

\paragraph{Judge validation.}
We validate GPT-5.2 on a 100-example subset over the five Llama-3.1-8B-family variants, using two human annotators and Claude Sonnet 4.6. GPT-5.2 agrees substantially with the human annotator average (QWK 0.6313, Spearman 0.7722), close to human--human agreement (QWK 0.6514, Spearman 0.6461), and also agrees strongly with Claude Sonnet 4.6 (QWK 0.7634, Spearman 0.8266). Full validation results are in \Cref{app:response-quality-eval}.

\subsection{Training-Aligned vs. Not-in-Training Tasks}
\label{sec:intrain-vs-heldout}

\begin{table}[!t]
\centering
\begin{minipage}{0.80\textwidth}
\centering
\scriptsize
\caption{Average performance on training-aligned and not-in-training evaluation tasks.}
\label{tab:seen-vs-heldout}
\setlength{\tabcolsep}{2pt}
\renewcommand{\arraystretch}{0.88}
\resizebox{\linewidth}{!}{%
\begin{tabular}{l cc cc cc cc}
\toprule
& \multicolumn{2}{c}{Llama-3.1-8B} & \multicolumn{2}{c}{Llama-3.2-3B} & \multicolumn{2}{c}{Qwen3-8B} & \multicolumn{2}{c}{Qwen3-4B} \\
\cmidrule(lr){2-3}\cmidrule(lr){4-5}\cmidrule(lr){6-7}\cmidrule(lr){8-9}
Model & Aligned & Not-in-train & Aligned & Not-in-train & Aligned & Not-in-train & Aligned & Not-in-train \\
\midrule
Base      & 37.1 & 54.4 & 4.9  & 47.2 & 35.4 & 38.7 & 33.2 & 24.8 \\
GRPO      & 53.0 & 57.5 & 31.5 & 47.4 & 40.3 & 50.3 & 43.2 & 49.8 \\
MinervaRL & 61.7 & 59.5 & 46.4 & 49.3 & 49.1 & 55.5 & 42.6 & 50.7 \\
\bottomrule
\end{tabular}%
}
\end{minipage}
\end{table}

We split the evaluation suite into tasks that directly match Minerva-CTI training objectives and tasks that are not directly optimized during training. The training-aligned group contains RCM, VSP, ATE, and RMS, corresponding to CVE-to-CWE mapping, CVSS vector prediction, ATT\&CK technique prediction, and mitigation recommendation. The not-in-training group contains CKT, CyberMetric, SOCEval, ElasticRule, APTNER, LANCE, AnnoCTR, and AZERG. \Cref{tab:seen-vs-heldout} reports the mean score for each group.

MinervaRL delivers the largest gains on training-aligned tasks, improving over the base model by +24.6, +41.5, +13.7, and +9.4 points on Llama-3.1-8B, Llama-3.2-3B, Qwen3-8B, and Qwen3-4B, respectively. It also improves over GRPO on three of the four backbones in this group. On not-in-training tasks, MinervaRL improves over GRPO for every backbone, with gains of +2.0, +1.9, +5.2, and +0.9 points. These results indicate that verifier-based CTI training improves directly aligned structured tasks while also transferring to related CTI knowledge, rule-mapping, SOC reasoning, and extraction benchmarks not used as Minerva-CTI training objectives.

\section{Training Dynamics and Additional Evaluations}
\label{sec:additional-analysis}

\subsection{Reward Sparsity and Validation Dynamics}
\label{sec:reward-sparsity-dynamics}

A central challenge in CTI RLVR is reward sparsity: for some prompts, none of the sampled rollouts receives any verifier reward, leaving GRPO with little direct learning signal for that example. We track this failure mode using the zero-solve fraction, defined as the fraction of prompts whose rollout group has maximum verifier reward equal to $0$. As shown in \Cref{fig:zero-solve-fraction}, MinervaRL reduces the zero-solve fraction relative to GRPO across all four backbone families. For Llama-3.1-8B, increasing the GRPO rollout budget from $N=8$ to $N=12$ also reduces zero-solve frequency, but does not close the gap to MinervaRL. This suggests that MinervaRL's gains are not merely due to additional sampling; answer-conditioned trace generation and original-prompt distillation help the policy reach verifier-detectable outputs under the same sparse-reward setting.

\begin{figure}[!t]
  \centering
  \includegraphics[width=.9\linewidth]{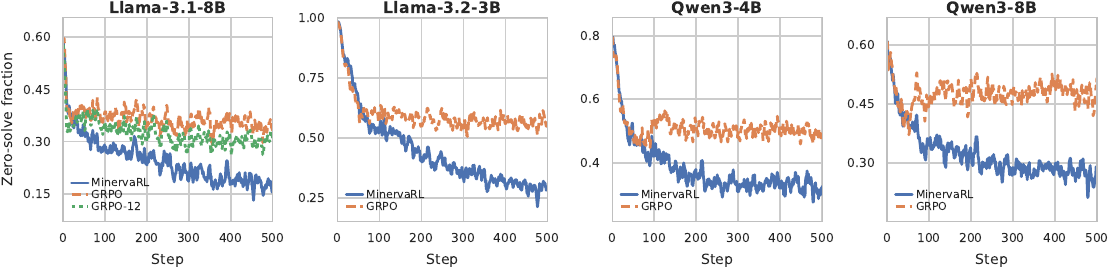}
  \caption{Fraction of prompts whose rollout group has maximum verifier reward $=0$. Each panel compares GRPO and MinervaRL; Llama-3.1-8B also includes GRPO-12 ($N=12$). Curves use a 5-step rolling mean.}
  \label{fig:zero-solve-fraction}
\end{figure}

\begin{figure}[!t]
  \centering
  \includegraphics[width=.9\linewidth]{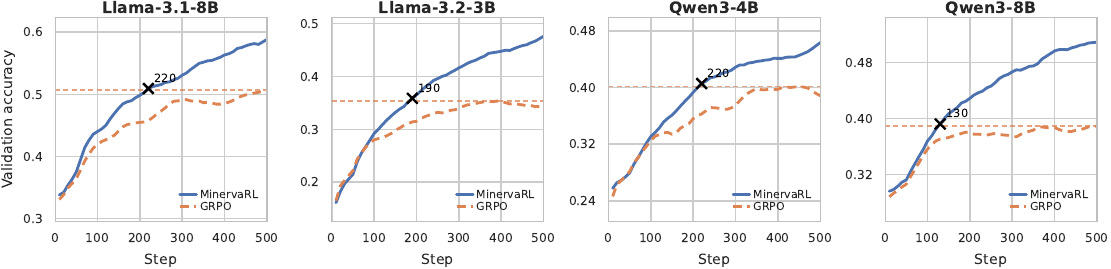}
  \caption{Validation accuracy over training. Curves are 5-step rolling means; dashed lines mark the best GRPO accuracy, and crosses mark the first MinervaRL step that reaches it.}
  \label{fig:validation-accuracy-dynamics}
\end{figure}

\Cref{fig:validation-accuracy-dynamics} shows that the reduction in zero-solve prompts is accompanied by improved validation performance. Across all four backbones, MinervaRL reaches the best GRPO validation accuracy during training and then continues improving beyond it. This indicates that the support-seeding mechanism does more than change rollout diversity: it converts sparse-reward prompts into useful training signal that improves verifier-scored validation accuracy.

\subsection{General Capability and Structured-Output Transfer}
\label{sec:other-benchmarks-transfer}

\begin{table}[!t]
\centering
\scriptsize
\caption{Additional evaluations: Llama-3.1-8B general/cyber benchmarks and Qwen2.5-Coder-3B text-to-SQL transfer. Higher is better except WMDP-Cyber. Underlined values indicate the best score in each column.}
\label{tab:additional-evals}
\begin{minipage}[t]{0.43\linewidth}
\centering
\textbf{(a) Other benchmarks}\par\vspace{2pt}
\setlength{\tabcolsep}{2pt}
\renewcommand{\arraystretch}{0.9}
\resizebox{\linewidth}{!}{%
\begin{tabular}{@{}l c c c c@{}}
\toprule
Model & \shortstack{MMLU\\Pro $\uparrow$} & IFEval $\uparrow$ & \shortstack{Canary\\Exploit $\uparrow$} & \shortstack{WMDP\\Cyber $\downarrow$} \\
\midrule
Llama-3.1-8B & 44.1 & 72.3 & 10.9 & 46.7 \\
Llama-Primus & 41.5 & 68.2 & \underline{29.8} & 46.3 \\
Foundation-Sec & 42.5 & 67.1 & 0.0 & \underline{45.3} \\
Llama-3.1-8B-GRPO & \underline{45.6} & 72.3 & 0.0 & 47.1 \\
Llama-3.1-8B-MinervaRL & 43.6 & \underline{73.8} & 28.4 & 47.1 \\
\bottomrule
\end{tabular}%
}
\end{minipage}
\hfill
\begin{minipage}[t]{0.54\linewidth}
\centering
\textbf{(b) Text-to-SQL transfer}\par\vspace{2pt}
\setlength{\tabcolsep}{2pt}
\renewcommand{\arraystretch}{0.9}
\resizebox{\linewidth}{!}{%
\begin{tabular}{@{}l c c c c c c c c@{}}
\toprule
Model & \shortstack{Spider\\Dev} & \shortstack{Spider\\Test} & \shortstack{BIRD\\Dev} & DK & Syn & Real. & Live & Avg \\
\midrule
Qwen2.5-Coder & 77.4 & 77.6 & 50.3 & 65.6 & 67.2 & \underline{69.7} & 3.7 & 58.8 \\
SQL-R1-3B & 77.1 & 77.4 & 53.3 & \underline{70.5} & 67.5 & 66.5 & 6.3 & 59.8 \\
GRPO & 77.5 & 78.5 & \underline{54.2} & 69.0 & 67.9 & 68.7 & 6.3 & 60.3 \\
MinervaRL & \underline{78.2} & \underline{79.3} & 52.5 & 69.2 & \underline{70.7} & 69.3 & \underline{6.7} & \underline{60.8} \\
\bottomrule
\end{tabular}%
}
\end{minipage}
\end{table}

We include two auxiliary evaluations: general/cyber benchmarks for Llama-3.1-8B variants and a preliminary text-to-SQL transfer experiment. \Cref{tab:additional-evals}(a) shows that MinervaRL largely preserves MMLU-Pro~\citep{wang2024mmlu} and IFEval~\citep{zhou2023instruction} performance, improves CanaryExploit~\citep{bhatt2024cyberseceval} over the base and GRPO variants, and does not increase WMDP-Cyber~\citep{li2024wmdp} relative to GRPO, where lower scores indicate lower measured cyber-risk.

\Cref{tab:additional-evals}(b) evaluates transfer to text-to-SQL, another structured-output setting with executable or verifier-style rewards. Starting from Qwen2.5-Coder-3B, we train GRPO and MinervaRL using the SQL-R1~\citep{ma2025sqlr1} training/validation setup and compare against the base model and SQL-R1-3B. MinervaRL obtains the best average score (60.8) and leads on four of seven datasets, suggesting that MinervaRL can provide gains beyond CTI in structured-output domains with sparse or long-tail verifiable rewards.

\subsection{Ablations and Hyperparameter Sensitivity}
\label{sec:ablations}

\begin{table}[!t]
\centering
\scriptsize
\caption{Ablations and sensitivity analysis. Underlined values indicate the best score in each comparison block.}
\label{tab:ablation-summary}
\setlength{\tabcolsep}{2.2pt}
\renewcommand{\arraystretch}{0.88}
\begin{tabular}{@{}lccc@{\hspace{0.9em}}cccc@{}}
\toprule
\multicolumn{4}{c}{\textbf{(a) Baselines and ablations}}
&
\multicolumn{4}{c}{\textbf{(b) Distillation scale}} \\
\cmidrule(r){1-4}
\cmidrule(l){5-8}
Approach
& \shortstack{Minerva-\\Dev}
& \shortstack{AthenaBench-\\Mini}
& Avg
& $\gamma$
& \shortstack{Minerva-\\Dev}
& \shortstack{AthenaBench-\\Mini}
& Avg \\
\midrule
Base                  & 18.2 & 43.0 & 30.6
& 0.01 & 60.3 & 61.3 & 60.8 \\
GRPO                  & 50.3 & 57.4 & 53.8
& 0.02 & 63.1 & 63.3 & 63.2 \\
GRPO (12 rollouts)    & 50.9 & 52.7 & 51.8
& 0.05 & \underline{63.2} & \underline{63.3} & \underline{63.3} \\
SFT (answer-only)     & 58.6 & 38.7 & 48.7
& 0.10 & 60.1 & 58.1 & 59.1 \\
MinervaRL             & 63.2 & \underline{63.3} & \underline{63.3}
&      &      &      &      \\
\midrule
No EMA teacher        & 63.1 & 62.2 & 62.6
&      &      &      &      \\
No filtering          & 63.0 & 61.4 & 62.2
&      &      &      &      \\
No ML filter          & \underline{64.2} & 59.0 & 61.6
&      &      &      &      \\
\bottomrule
\end{tabular}
\end{table}

\Cref{tab:ablation-summary}(a) tests whether MinervaRL's gains can be matched by simpler alternatives or attributed to a single component. Increasing the GRPO rollout budget from $N=8$ to $N=12$ does not close the gap to MinervaRL, indicating that the improvement is not explained by additional exploration alone. Direct answer-only SFT improves Minerva-Dev but performs much worse on AthenaBench-Mini, suggesting that final-answer imitation is less robust than verifier-guided RL augmented with trace distillation. Component ablations further show that the full MinervaRL pipeline gives the strongest average performance: removing the EMA teacher, removing all filtering, or removing only the ML filter each reduces the average score.

\Cref{tab:ablation-summary}(b) shows sensitivity to the distillation learning-rate scale $\gamma$ used in the supervised update, with learning rate $\gamma\cdot \mathrm{lr}_{\mathrm{rlvr}}$. Performance is best around $\gamma=0.05$: smaller values under-use accepted traces, while a larger value degrades both validation splits. We therefore use $\gamma=0.05$ in the main experiments.



\section{Limitations and Future Work}
\label{sec:limitations}

Our study has several limitations. \textbf{Compute and ablations.} Because RLVR training is expensive, we only sweep the distillation learning-rate scale $\gamma$ and leave broader MinervaRL hyperparameter sweeps, including the distillation interval $I$, per-interval cap $M$, and EMA decay $\alpha$, for future work. \textbf{Data coverage.} Minerva-CTI and our evaluation suite are primarily English-centric; multilingual CTI, regional reporting styles, and non-English security artifacts remain important extensions. \textbf{Trace filtering.} MinervaRL uses lightweight heuristics and a TextCNN filter for ACR trace selection. Stronger LLM-based filters may improve trace quality but would increase cost, motivating more scalable quality estimators. \textbf{Training overhead.} MinervaRL adds overhead from ACR generation, log-probability computation, and periodic distillation. \hyperref[app:timing-overhead]{Appendix~\ref*{app:timing-overhead}} reports a 39.3\% increase in a 10-step Llama-3.1-8B timing study, although \Cref{fig:validation-accuracy-dynamics} shows faster progress toward strong validation performance. Reducing this overhead while preserving the support-seeding benefit is an important direction.
\section{Conclusion}

We presented \textbf{Minerva}, a verifiable-reward training framework for cyber threat intelligence LLMs. Minerva-CTI provides verifier-checkable tasks across vulnerability, detection, and procedure-oriented workflows, while MinervaRL extends GRPO with hardness-gated answer-conditioned rationale generation and original-prompt distillation. Across 12 CTI benchmarks and four open-weight backbones, MinervaRL improves over matched base models, standard GRPO, and controlled SFT, rejection-finetuning, or off-policy RLVR baselines on average. These results show that CTI's structured taxonomies and canonical identifiers are well suited to verifier-driven post-training, and that support-seeding with filtered answer-conditioned traces can improve RLVR under sparse rewards.

\subsubsection*{Broader Impact Statement}

This work aims to improve open-weight large language models (LLMs) for cyber threat intelligence (CTI) analysis.
Our dataset and training objectives are curated for defensive CTI workflows, emphasizing structured extraction and reasoning over defensive artifacts (e.g., indicators, vulnerabilities, and mitigations) with verifier-checkable targets.
Like other advances in CTI automation, the methods could be dual use: improved capability may also lower the cost of generating offensive guidance.
Given the defensive orientation of our training data and tasks, we expect the resulting models to be most useful for CTI analysts working on defense.

\bibliography{minerva-tmlr}
\bibliographystyle{tmlr}

\clearpage

\appendix


\section{Minerva-CTI Dataset}
\label{sec:minerva-datasets}

Minerva-CTI comprises 16 tasks covering vulnerability descriptions, detection content, and structured threat knowledge bases. Each task is defined as an input--target prediction problem derived from a specific upstream source, with targets expressed as canonical identifiers (e.g., ATT\&CK technique, tactic, or mitigation IDs). Across all tasks, the dataset contains 32{,}000 training instances and 1{,}200 validation instances. \Cref{tab:minerva-datasets} summarizes the per-task sample counts for the training and validation splits in Minerva-CTI.

\paragraph{1. Mappings-Explorer CVE$\rightarrow$ATT\&CK Exploitation.}
This task maps vulnerability descriptions to the ATT\&CK technique directly used for exploitation. Given a CVE description as input, the model predicts a single ATT\&CK technique identifier (formatted as \texttt{Txxxx} or \texttt{Txxxx.yyy}). Ground-truth labels are obtained from the Center for Threat-Informed Defense Mappings Explorer CVE$\rightarrow$ATT\&CK mappings, with technique definitions aligned to the ATT\&CK catalog~\citep{mappings_explorer,mitre_attack}.

\paragraph{2. Mappings-Explorer CVE$\rightarrow$ATT\&CK Primary Impact.}
This task focuses on the main adversarial impact resulting from successful exploitation. The input is the CVE description, and the target is a single ATT\&CK technique identifier corresponding to the primary post-exploitation effect (e.g., credential access or privilege escalation). Labels are sourced from the Mappings Explorer and normalized using the ATT\&CK technique taxonomy~\citep{mappings_explorer,mitre_attack}.

\paragraph{3. Mappings-Explorer CVE$\rightarrow$ATT\&CK Secondary Impact.}
This task predicts a subsequent impact enabled by the primary impact. The input consists of the CVE description concatenated with the given primary-impact technique ID, and the target is a single ATT\&CK technique identifier representing the secondary effect. Ground-truth annotations are derived from the Mappings Explorer and mapped to canonical ATT\&CK identifiers~\citep{mappings_explorer,mitre_attack}.

\paragraph{4. Sigma$\rightarrow$ATT\&CK Tactics.}
This task infers high-level adversary intent from detection logic. Given a Sigma rule excerpt—including the rule title, \texttt{logsource}, and \texttt{detection} fields—the model predicts a multi-label set of ATT\&CK tactic identifiers (formatted as \texttt{TA000x}). Sigma rules are sourced from the public Sigma repository, and annotations are expressed using the ATT\&CK tactic taxonomy~\citep{sigma,mitre_attack}.

\paragraph{5. Sigma$\rightarrow$ATT\&CK Technique.}
This task maps detection logic to the specific adversarial behavior it is designed to identify. Using the same Sigma rule excerpt as input, the model predicts a single ATT\&CK technique identifier (formatted as \texttt{Txxxx} or \texttt{Txxxx.yyy}). Rules are drawn from the Sigma repository, with targets aligned to canonical ATT\&CK technique identifiers~\citep{sigma,mitre_attack}.

\paragraph{6. Atomic Red Team$\rightarrow$ATT\&CK Technique.}
This task maps adversary procedure descriptions to their corresponding ATT\&CK techniques. The input is an Atomic Red Team procedure snippet that includes execution steps or commands and platform context, and the target is a single ATT\&CK technique identifier. Examples are drawn from the Atomic Red Team repository, which is natively aligned with the ATT\&CK framework~\citep{atomic_red_team,mitre_attack}.

\paragraph{7. Microsoft Sentinel$\rightarrow$ATT\&CK Technique.}
This task links analytics rules to the ATT\&CK techniques they are intended to detect. The input comprises the rule title, description, and associated KQL query, and the target is a single ATT\&CK technique identifier. Rules are sourced from the Microsoft Sentinel content repository, with labels normalized to the ATT\&CK taxonomy~\citep{azure_sentinel,mitre_attack}.

\paragraph{8. Splunk Security Content$\rightarrow$ATT\&CK Technique.}
This task maps SPL-based detection content to the ATT\&CK technique it targets. The input consists of an SPL query, its detection narrative, and metadata, and the target is a single ATT\&CK technique identifier. Content is obtained from Splunk Security Content, with annotations expressed using canonical ATT\&CK technique IDs~\citep{splunk_security_content,mitre_attack}.

\paragraph{9. ATT\&CK Scenario$\rightarrow$Technique.}
This task identifies the ATT\&CK technique that best corresponds to a described adversary scenario. The input is a scenario text derived from ATT\&CK procedure examples, and the target is a single ATT\&CK technique identifier. Scenarios and labels are sourced directly from the ATT\&CK knowledge base and its machine-readable releases~\citep{mitre_attack,attack_stix_data}.

\paragraph{10. ATT\&CK Scenario$\rightarrow$Tactics.}
This task infers the adversary intent categories implied by a scenario. Given the same scenario text as input, the model predicts a multi-label set of ATT\&CK tactic identifiers (\texttt{TA000x}) associated with the underlying behavior. Annotations are derived from the ATT\&CK taxonomy and its structured releases~\citep{mitre_attack,attack_stix_data}.

\paragraph{11. ATT\&CK Scenario$\rightarrow$Mitigations.}
This task predicts mitigations relevant to the behaviors described in a scenario. The input is the ATT\&CK scenario text, and the target is a multi-label set of ATT\&CK mitigation identifiers (\texttt{Mxxxx}) associated with the corresponding techniques. Mitigation mappings are obtained from the ATT\&CK knowledge base~\citep{mitre_attack,attack_stix_data}.

\begin{table}[!t]
\vskip 0.15in
\begin{center}
\begin{small}
\caption{Minerva training datasets and split sizes.}
\label{tab:minerva-datasets}
\begin{tabular}{l l r r}
\toprule
Dataset & Target & Train & Val \\
\midrule
Mappings-Explorer CVE$\rightarrow$ATT\&CK Exploitation & ATT\&CK technique ID & 245 & 20 \\
Mappings-Explorer CVE$\rightarrow$ATT\&CK Primary Impact & ATT\&CK technique ID & 210 & 20 \\
Mappings-Explorer CVE$\rightarrow$ATT\&CK Secondary Impact & ATT\&CK technique ID & 64 & 10 \\
Sigma$\rightarrow$ATT\&CK Tactics & ATT\&CK tactic IDs & 950 & 50 \\
Sigma$\rightarrow$ATT\&CK Technique & ATT\&CK technique ID & 950 & 50 \\
Atomic Red Team$\rightarrow$ATT\&CK Technique & ATT\&CK technique ID & 950 & 50 \\
Microsoft Sentinel$\rightarrow$ATT\&CK Technique & ATT\&CK technique ID & 950 & 50 \\
Splunk Security Content$\rightarrow$ATT\&CK Technique & ATT\&CK technique ID & 280 & 20 \\
ATT\&CK Scenario$\rightarrow$Technique & ATT\&CK technique ID & 7{,}780 & 220 \\
ATT\&CK Scenario$\rightarrow$Tactics & ATT\&CK tactic IDs & 1{,}950 & 50 \\
ATT\&CK Scenario$\rightarrow$Mitigations & ATT\&CK mitigation IDs & 7{,}780 & 220 \\
NVD CVE$\rightarrow$CWE & CWE IDs & 6{,}696 & 220 \\
NVD CVE$\rightarrow$CVSS v3.1 & CVSS v3.1 vector & 1{,}900 & 100 \\
ATT\&CK Threat Actor Attribution & Threat actor name & 779 & 60 \\
CAPEC Example$\rightarrow$CAPEC & CAPEC ID & 340 & 40 \\
CAPEC Example$\rightarrow$CWE & CWE IDs & 176 & 20 \\
\midrule
Total & -- & 32{,}000 & 1{,}200 \\
\bottomrule
\end{tabular}
\end{small}
\end{center}
\vskip -0.1in
\end{table}

\paragraph{12. NVD CVE$\rightarrow$CWE.}
This task maps vulnerability descriptions to their underlying weakness categories. The input is the CVE description text, and the target is a multi-label set of CWE identifiers (formatted as \texttt{CWE-xxx}). CVE records are obtained from the National Vulnerability Database, with labels aligned to the CWE taxonomy~\citep{nvd,cwe}.

\paragraph{13. NVD CVE$\rightarrow$CVSS v3.1.}
This task predicts the CVSS v3.1 base vector associated with a vulnerability. Given a CVE description as input, the model outputs the corresponding CVSS v3.1 base vector string (e.g., \texttt{CVSS:3.1/AV:N/...}). CVE entries are sourced from the National Vulnerability Database, and targets follow the official CVSS v3.1 specification~\citep{nvd,cvss_v31}.

\paragraph{14. ATT\&CK Threat Actor Attribution.}
This task performs threat actor attribution from observed behaviors expressed as procedure text. Examples are derived from MITRE ATT\&CK Enterprise intrusion-set (group) entries by collecting each actor’s \textit{techniques used} relationships and extracting the associated procedure descriptions that characterize how the actor operates~\citep{mitre_attack,attack_stix_data}. Training instances are sampled from per-actor pools of procedures, with counts allocated roughly in proportion to available procedure coverage (e.g., binning by technique count and enforcing minimum coverage) to prevent a small number of well-documented actors from dominating the dataset. To reduce lexical leakage, prompts are anonymized by replacing explicit actor mentions with a generic placeholder (e.g., ``A threat actor'') and by generalizing other named entities by type (e.g., campaign, malware, tool) while preserving the remaining structure. The true actor name is retained only as the target label, and aliases are recorded in a lookup table for evaluation and reward scoring.

\paragraph{15. CAPEC Example$\rightarrow$CAPEC.}
This task maps attack example narratives to the CAPEC attack pattern they exemplify. The input is a CAPEC example description, and the target is a single CAPEC identifier (formatted as \texttt{CAPEC-xxx}). Both example texts and pattern identifiers are sourced from the CAPEC catalog~\citep{capec}.

\paragraph{16. CAPEC Example$\rightarrow$CWE.}
This task associates CAPEC attack examples with the underlying software weakness categories they exploit. The input is the same CAPEC example narrative, and the target is a multi-label set of CWE identifiers (formatted as \texttt{CWE-xxx}). Example descriptions are drawn from CAPEC, with labels aligned to the CWE taxonomy~\citep{capec,cwe}.

\section{Reward Functions for RLVR}
\label{sec:minerva-rewards}

Each Minerva-CTI task is paired with a deterministic verifier whose form depends on the task output type. Given a prompt $x$, a model completion $c$, and a ground-truth target $t$, the verifier first extracts a task-specific prediction $\hat{y}$ from $c$ and then assigns a reward $r\in[0,1]$. Single-label tasks use exact identifier matching, hierarchical ATT\&CK tasks use partial credit for base-technique matches, set-valued tasks use set-$F_1$, CVSS tasks use score-distance credit, and threat-actor attribution accepts known aliases. Thus, rewards measure task correctness after answer extraction and normalization rather than surface-form similarity.

\paragraph{System prompt and answer extraction.}
All Minerva-CTI training tasks use a shared system prompt that asks the model to reason step by step and place the final answer inside \verb|\boxed{...}|:

\begin{tcolorbox}[title=Train System Prompt]
You are given a cyber threat intelligence question. Solve it by reasoning step by step.\\
Present the final answer clearly inside \textbackslash boxed\{\}.
\end{tcolorbox}

For reward computation, we prioritize the final \verb|\boxed{...}| span when present. During training, extraction is intentionally strict: if no boxed span is found, we accept only explicit answer lines such as \texttt{Answer:} or \texttt{Final answer:}, and require either one unambiguous identifier for single-label tasks or a well-formed identifier set for multi-label tasks. During validation and testing, extraction is more permissive: if no boxed answer is present, we try answer-line parsing, tagged answer spans, and the last non-empty line, then apply task-specific regular expressions to recover candidate identifiers.

\paragraph{Normalization.}
Before scoring, predictions and targets are canonicalized. The function $\mathrm{norm}_{\mathrm{id}}(\cdot)$ trims whitespace and uppercases identifiers. For ATT\&CK technique IDs, we additionally normalize sub-technique notation by zero-padding suffixes when needed, e.g., \texttt{T1059.3} $\mapsto$ \texttt{T1059.003}. For threat-actor names, $\mathrm{norm}_{\mathrm{actor}}(\cdot)$ lowercases text, removes non-alphanumeric characters, and collapses repeated whitespace. Set-valued predictions are deduplicated after normalization.

\paragraph{Single identifier exact match.}
For tasks with a single canonical identifier, such as CAPEC Example$\rightarrow$CAPEC, the extracted prediction $\hat{y}$ receives binary exact-match credit:
\[
r
=
\mathbb{1}\!\left[
\mathrm{norm}_{\mathrm{id}}(\hat{y})
=
\mathrm{norm}_{\mathrm{id}}(t)
\right].
\]

\paragraph{ATT\&CK technique identifier.}
For tasks whose target is an ATT\&CK technique ID, including CVE$\rightarrow$ATT\&CK, scenario$\rightarrow$technique, and detection$\rightarrow$technique tasks, we give full credit for an exact normalized ID match and partial credit when the base technique matches but sub-technique specificity differs. Let $b(\cdot)$ return the base technique ID, e.g., \texttt{T1059}, and let $s(\cdot)\in\{0,1\}$ indicate whether the ID includes a sub-technique suffix. The reward is
\[
r =
\begin{cases}
1.0, &
\mathrm{norm}_{\mathrm{id}}(\hat{y})
=
\mathrm{norm}_{\mathrm{id}}(t),\\
0.5, &
b(\hat{y})=b(t)
\ \wedge\
\bigl(s(\hat{y})=1 \ \vee\ s(t)=1\bigr),\\
0.0, &
\text{otherwise.}
\end{cases}
\]
This credit reflects the fact that upstream CTI sources sometimes annotate the same behavior at different levels of ATT\&CK granularity.

\paragraph{Multi-label identifier sets.}
For set-valued targets, including scenario$\rightarrow$tactics, scenario$\rightarrow$mitigations, NVD CVE$\rightarrow$CWE, and CAPEC Example$\rightarrow$CWE, we normalize the predicted and target identifier sets as $P$ and $T$. Invalid identifiers are discarded and duplicates are removed. The reward is set-$F_1$:
\[
r =
\begin{cases}
1.0, & P=T=\varnothing,\\
0.0, & (P=\varnothing) \oplus (T=\varnothing),\\
\dfrac{2|P\cap T|}{|P|+|T|}, & \text{otherwise,}
\end{cases}
\]
where $\oplus$ denotes exclusive-or. This penalizes both missing required labels and adding spurious labels.

\paragraph{CVSS v3.1 base vector.}
For NVD CVE$\rightarrow$CVSS v3.1, the prediction must parse as a valid CVSS v3.1 base vector. We require each of the eight base metrics to appear exactly once:
\[
\mathcal{M}
=
\{\mathrm{AV},\mathrm{AC},\mathrm{PR},\mathrm{UI},\mathrm{S},\mathrm{C},\mathrm{I},\mathrm{A}\}.
\]
Base metrics may appear in any order, and Temporal or Environmental metrics are ignored if present. If the prefix, version, metric set, or metric values are invalid, the reward is $0$. Otherwise, we compute the CVSS v3.1 base score $s(\cdot)\in[0,10]$ for the predicted and target vectors and assign distance-based credit:
\[
r
=
\max\!\left(
0,\,
1-\frac{|s(\hat{y})-s(t)|}{\delta}
\right),
\qquad
\delta=10.
\]

\paragraph{Threat actor attribution.}
For ATT\&CK threat-actor attribution, each target actor has an alias set $A(t)$ derived from ATT\&CK group profiles, including the canonical name and known aliases. From the extracted answer span, we form a normalized candidate set $Y$ by splitting on commas and semicolons and applying $\mathrm{norm}_{\mathrm{actor}}(\cdot)$. The reward is
\[
r
=
\mathbb{1}\!\left[
Y\cap A(t)\neq\varnothing
\right].
\]
This gives credit when the model predicts either the canonical actor name or a recognized alias.
\section{Answer-Conditioned Rationale (ACR) Filtering}
\label{app:filtering}

\subsection{ACR Generation}
\label{subsec:acr-generation}

For hard prompts, MinervaRL generates answer-conditioned rationale (ACR) traces by augmenting the original training prompt with a label-revealing block. The ACR prompt provides the ground-truth label(s), optionally includes a truncated canonical reference, and asks the model to produce a short justification that ends in the same final-answer format required by the original task. The prompt explicitly discourages leakage language, such as stating that the answer was provided. \Cref{fig:acr-template} shows the template.

\begin{figure}[!t]
\centering
\begin{tcolorbox}[title=ACR Generation Prompt]
You are generating a reasoning trace for training.\\
\\
GROUND\_TRUTH\_LABELS:\\
- \textless LABEL\_1\textgreater\\
- \textless LABEL\_2\textgreater\\
\\
LABEL\_REFERENCE:\\
\textless details\_text\textgreater\\
\\
Instructions:\\
- Write a short reasoning that would justify selecting the correct label(s) from the input.\\
- \textless TASK\_OR\_ENTITY\_REASONING\_HINT\textgreater\\
- Do NOT say or imply that the answer was provided (no phrases like ``given the answer'', ``based on the provided label'', ``ground truth'', etc.).\\
- End with the final answer in the same format required by the original task.
\end{tcolorbox}
\caption{Answer-conditioned rationale (ACR) prompt template used to elicit training traces.}
\label{fig:acr-template}
\end{figure}

\subsection{Candidate Filtering and Selection}
\label{subsec:acr-filtering}

For each buffered training prompt with unique identifier (UID) $i$, ACR generation produces $K$ candidate traces $\{c_{i,k}\}_{k=1}^{K}$. MinervaRL retains at most one trace per UID for self-distillation. Filtering proceeds in three stages. First, each candidate is scored by the task-specific Minerva verifier, yielding a correctness score $s_{i,k}\in[0,1]$; only candidates with $s_{i,k}=1$ are considered for distillation. Second, verifier-correct candidates pass through heuristic filters that remove leakage and low-quality generations. Third, the remaining candidates are scored by a lightweight TextCNN quality classifier, and the highest-scoring eligible trace is selected.

\paragraph{Heuristic filters.}
We apply four heuristic checks before the ML filter. The leakage filter rejects candidates containing curated phrases or regex patterns that directly refer to provided labels, references, or ground-truth answers. The reasoning-length filter rejects candidates whose reasoning portion, excluding the final answer line, contains fewer than 100 characters. The grounding filter rejects candidates whose reasoning has Jaccard overlap below $0.05$ with the task description plus label reference, computed over lowercased alphanumeric tokens after removing ID-like tokens. Finally, the degeneracy filter rejects repetitive responses when the response has at least 30 tokens. Degeneracy checks include $n$-gram repetition thresholds, repeated-window checks, and near-duplicate sentence checks. Specifically, we reject if $\mathrm{rep}_3\ge 0.70$ or $\mathrm{rep}_4\ge 0.75$; if repeated windows of size 24 and stride 12 are exact matches or have 3-gram Jaccard similarity at least $0.9$; or if sentences within a 6-sentence window have SequenceMatcher similarity at least $0.75$ when both sentences contain at least 6 words. These thresholds apply only to the heuristic repetition checks; the TextCNN acceptance threshold is $\tau_q=0.5$.

\paragraph{TextCNN quality filter.}
Candidates that pass the heuristic filters are scored by the deployed response-only TextCNN classifier. The classifier tokenizes alphanumeric response tokens, truncates to at most 1024 tokens, and outputs a \textsc{good} probability $q_{i,k}\in[0,1]$. We keep candidates with $q_{i,k}\ge \tau_q$, where $\tau_q=0.5$. Let $L_{i,k}$ and $D_{i,k}$ denote whether candidate $k$ is rejected by leakage/quality heuristics or degeneracy heuristics, respectively. The eligible set is
\[
\mathcal{E}_i
=
\left\{
k
\;\middle|\;
s_{i,k}=1,\;
L_{i,k}=0,\;
D_{i,k}=0,\;
q_{i,k}\ge \tau_q
\right\}.
\]
If $\mathcal{E}_i=\varnothing$, UID $i$ contributes no distillation example in that interval. Otherwise, we select the highest-confidence candidate,
\[
k^\star
=
\arg\max_{k\in\mathcal{E}_i} q_{i,k},
\]
breaking ties uniformly at random. The selected distillation record pairs the original answer-free prompt with the accepted ACR response, i.e., $(x_i,c_{i,k^\star})$.

\subsection{Filtering Dynamics}
\label{subsec:filtering-dynamics}

\Cref{fig:filter-over-training} tracks the ACR filtering pipeline during training for Llama-3.1-8B-Instruct. We report the fraction of batches that pass the heuristic filter, pass the ML filter, and satisfy UID coverage, meaning that accepted traces span a sufficient number of distinct prompts. These rates increase over training, indicating that as the actor and EMA teacher improve, more ACR candidates are both verifier-correct and suitable for stable original-prompt distillation.

\begin{figure}[!t]
  \centering
  \includegraphics[width=0.8\linewidth]{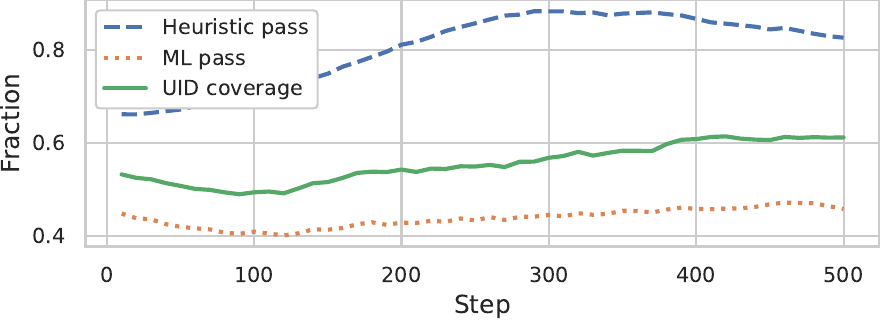}
  \caption{Acceptance rates of the ACR self-distillation pipeline over training for Llama-3.1-8B-Instruct: fraction of batches passing the heuristic filter, passing the machine-learning (ML) filter, and meeting unique identifier (UID) coverage.}
  \label{fig:filter-over-training}
\end{figure}

\section{Training the TextCNN Reasoning-Quality Classifier}
\label{sec:textcnn-judge-training}

MinervaRL uses a lightweight TextCNN classifier to filter verifier-correct ACR traces before distillation. The classifier is trained to identify rationale quality rather than answer correctness: all training examples are already verifier-correct, and labels distinguish useful traces from traces with leakage, incoherence, unsupported reasoning, or other quality failures.

\subsection{Dataset Curation}
\label{subsec:textcnn-dataset-curation}

We construct the TextCNN training data from Minerva-CTI prompts. For each prompt, we generate responses in two formats: \emph{plain} responses from the original prompt, and \emph{hinted} responses from an ACR-style prompt containing the ground-truth label, a label reference, and an explicit training-trace marker. Responses are generated with a diverse set of open or open-weight instruction-tuned LLMs using typical decoding settings of temperature $0.7$ and maximum length 1024 new tokens.

To ensure that the classifier learns rationale quality rather than answer correctness, we first score all responses with the task verifier and retain only reward-correct responses with $\texttt{reward}=1$. These responses are then labeled \texttt{GOOD} or \texttt{BAD} by a rubric-based LLM judge. The judge does not re-grade correctness; instead, it marks a response as \texttt{BAD} if it exhibits any quality failure, including leakage, incoherence, unsupported claims, mismatch between rationale and final answer, answer-only reasoning, refusals, prompt copying, or other artifacts. The judge is instructed to return a single JSON object, as shown in \Cref{fig:llm-judge-prompt}.

\begin{figure}[!t]
\centering
\begin{tcolorbox}[title=LLM Judge Prompt]
You are an expert CTI evaluator. You will be given a Question and a Response.\\
The Question may include answer/reference text as hints.\\[2pt]
Label the Response as GOOD or BAD. Mark BAD if any single criterion below is present;\\
otherwise mark GOOD. Do not grade correctness beyond these checks.\\[4pt]
Return exactly one JSON object on a single line and nothing else.\\
- GOOD: \{"label": "GOOD"\}\\
- BAD: \{"label": "BAD", "category\_id": \textless number\textgreater, "category\_title": "\textless title\textgreater"\}\\[4pt]
If multiple BAD criteria apply, choose the lowest-numbered category.\\[4pt]
Criteria (one per bullet):\\
1. Leakage: says or implies the answer/label/options/reference were provided,\\
\phantom{1. }or quotes/paraphrases provided reference text instead of reasoning from the prompt.\\
2. Incoherent: loops, repeated phrases/lines, templated filler, or gibberish.\\
3. Ungrounded: invents concrete details not in the question (extra CVEs,\\
\phantom{3. }vendors, malware names, IOCs, dates, techniques, etc.).\\
4. Mismatch: reasoning supports a different label than the final answer,\\
\phantom{4. }or directly contradicts it.\\
5. Unsupported: provides reasoning but does not use evidence from the prompt to\\
\phantom{5. }justify the answer (hand-wavy or irrelevant justification), OR provides an\\
\phantom{5. }answer with no reasoning at all (answer-only).\\
6. Other: refusals, policy/meta artifacts, generic CTI tutorials, or prompt copying.\\[6pt]
Example outputs:\\
\{"label": "GOOD"\}\\
\{"label": "BAD", "category\_id": 1, "category\_title": "Leakage"\}\\[6pt]
QUESTION:\\
\{QUESTION\}\\[4pt]
RESPONSE:\\
\{RESPONSE\}
\end{tcolorbox}
\caption{Prompt template used to label reward-correct responses as \texttt{GOOD}/\texttt{BAD} for training the TextCNN rationale-quality classifier.}
\label{fig:llm-judge-prompt}
\end{figure}

To improve coverage of failure modes, we also synthesize additional \texttt{BAD} examples by conditioning a generator on randomly sampled rubric violations. These synthetic examples are retained only if they remain verifier-correct. We then build a balanced dataset by downsampling to 32k \texttt{GOOD} and 32k \texttt{BAD} examples, and split each class 80/20 into train and validation partitions.

\subsection{TextCNN Model and Training}
\label{subsec:textcnn-training}

The deployed classifier uses response text only; prompt text is not concatenated to the input. Responses are tokenized with the regex tokenizer \texttt{[A-Za-z0-9\_]+}, lowercased, and truncated to a maximum of 1024 tokens. The vocabulary is built from the training split with \texttt{max\_vocab}=100{,}000 and \texttt{min\_freq}=2, using \texttt{<pad>} and \texttt{<unk>} as special tokens.

The model is a standard TextCNN: an embedding layer followed by 1D convolutions with kernel widths 3, 4, and 5; ReLU activations; max-over-time pooling; feature concatenation; dropout; and a two-way linear classifier. We train with AdamW and cross-entropy loss for 5 epochs using batch size 128. Hyperparameters are selected by grid search over learning rate $\{3{\times}10^{-4},6{\times}10^{-4},10^{-3},2{\times}10^{-3},4{\times}10^{-3}\}$, filters per kernel $\{256,384\}$, dropout $\{0,0.25\}$, embedding dimension $\{200,300\}$, and maximum response-token budget. The deployed model uses learning rate $6{\times}10^{-4}$, 384 filters per kernel, embedding dimension 300, dropout 0.25, maximum response length 1024 tokens, and threshold 0.5. On the balanced validation split, it achieves 0.826 accuracy and $F_1=0.828$.

\subsection{Validation and Error Analysis}
\label{subsec:textcnn-error-analysis}

\begin{table}[!t]
\centering
\caption{Validation confusion matrix for the TextCNN reasoning-quality filter at threshold 0.5.}
\label{tab:textcnn-confusion}
\begin{tabular}{lrr}
\toprule
 & Predicted GOOD & Predicted BAD \\
\midrule
Actual GOOD & 5{,}375 & 1{,}025 \\
Actual BAD  & 1{,}206 & 5{,}194 \\
\bottomrule
\end{tabular}
\end{table}

\Cref{tab:textcnn-confusion} reports the validation confusion matrix. False positives are \texttt{BAD} responses accepted by the TextCNN. As shown in \Cref{tab:textcnn-fp-breakdown}, most false positives involve leakage: 738 of 1{,}206 false positives (61.2\%) explicitly or implicitly reveal access to the provided answer or reference. Other common false-positive categories are ungrounded claims (17.7\%) and rationale-answer mismatch (13.2\%). These errors are often fluent and plausible, which makes them difficult for a lightweight lexical classifier to reject.

A representative \texttt{scenario\_to\_attack\_technique} false positive received $p(\mathrm{GOOD})=0.829$ from TextCNN, but the judge labeled it \texttt{BAD}/Leakage because it implies access to the provided reference techniques:

\begin{tcolorbox}[breakable,colback=white,colframe=black!25,boxrule=0.4pt,arc=1pt,left=4pt,right=4pt,top=4pt,bottom=4pt]
\small
A malware has installed a bootkit on the system to maintain persistence.
A bootkit modifies the boot sectors of a hard drive, allowing malicious code to execute before the operating system has loaded.
This behavior aligns with the definition of techniques that involve modifying the operating system or firmware to achieve persistence.

The Master Boot Record (MBR) and Volume Boot Record (VBR) are key components involved in the boot process, and modifying these areas would allow an adversary to execute malicious code before the operating system loads.
This is a common method used by bootkits to persist on systems.

\emph{Considering the provided techniques}, the most suitable ID is \texttt{T1542.003}, which directly corresponds to Bootkit, as it describes the use of a bootkit to persist on systems.
Therefore, the selected technique ID is \texttt{T1542.003}.
\end{tcolorbox}

False negatives are \texttt{GOOD} responses rejected by the TextCNN. \Cref{tab:textcnn-fn-breakdown} shows that these are concentrated in open-ended ATT\&CK generation tasks, especially \texttt{scenario\_to\_attack\_mitigations} and \texttt{scenario\_to\_attack\_technique}. These tasks permit diverse valid rationales, so a compact response-only classifier can reject acceptable traces whose style differs from common \texttt{GOOD} examples. This error profile is consistent with the ablations in \Cref{tab:ablation-summary}: removing the full filtering pipeline or only the ML filter reduces AthenaBench-Mini and average performance relative to full MinervaRL, despite occasional TextCNN errors.

\begin{table}[!t]
\centering
\caption{False positives by LLM-judge rubric category for the TextCNN filter at threshold 0.5.}
\label{tab:textcnn-fp-breakdown}
\begin{tabular}{lrr}
\toprule
Rubric category & Count & Share of FP \\
\midrule
Leakage & 738 & 61.2\% \\
Ungrounded & 214 & 17.7\% \\
Mismatch & 159 & 13.2\% \\
Incoherent & 77 & 6.4\% \\
Unsupported & 18 & 1.5\% \\
\bottomrule
\end{tabular}
\end{table}

\begin{table}[!t]
\centering
\caption{False negatives by task for the TextCNN filter at threshold 0.5.}
\label{tab:textcnn-fn-breakdown}
\small
\begin{tabular}{lrr}
\toprule
Task & Count & Share of FN \\
\midrule
\texttt{scenario\_to\_attack\_mitigations} & 392 & 38.2\% \\
\texttt{scenario\_to\_attack\_technique} & 194 & 18.9\% \\
\texttt{cve\_to\_cwe} & 173 & 16.9\% \\
\texttt{scenario\_to\_attack\_tactics} & 64 & 6.2\% \\
\texttt{sentinel\_to\_attack\_technique} & 35 & 3.4\% \\
\texttt{sigma\_to\_attack\_technique} & 35 & 3.4\% \\
\texttt{sigma\_to\_attack\_tactics} & 34 & 3.3\% \\
\texttt{cve\_to\_cvss\_v31} & 28 & 2.7\% \\
\texttt{threat\_actor} & 23 & 2.2\% \\
\texttt{art\_to\_attack\_technique} & 20 & 2.0\% \\
Other tasks & 27 & 2.6\% \\
\bottomrule
\end{tabular}
\end{table}

\section{Evaluation Datasets}
\label{app:eval-datasets}

We evaluate on 12 CTI and cybersecurity benchmarks covering multiple-choice knowledge QA, SOC-style reasoning, structured taxonomy prediction, vulnerability assessment, and information extraction. Each benchmark is evaluated with task-specific answer extraction and the corresponding exact-match or structured metric described in \Cref{sec:exp-settings}. Meta tasks report the aggregate sample count over their constituent subtasks.

\begin{table}[!t]
\centering
\small
\setlength{\tabcolsep}{4pt}
\renewcommand{\arraystretch}{1.12}
\caption{Evaluation datasets used for \Cref{tab:main-results}.}
\label{tab:eval-datasets}
\begin{tabular}{p{3.0cm} r p{8.4cm}}
\toprule
Task & \# Samples & Output format and evaluation target \\
\midrule
CKT~\citep{alam2025athenabench}
& 3000
& Five-option CTI knowledge QA; output a single option A--E. \\

CyberMetric~\citep{cybermetric}
& 2000
& Four-option cybersecurity knowledge QA; output a single option A--D. \\

SOCEval~\citep{deason2025cybersoceval}
& 588
& Multi-select SOC-style reasoning over threat-intelligence reports; output a JSON object in \texttt{<json\_object>} tags with a \texttt{correct\_answers} array. \\

RCM~\citep{alam2025athenabench}
& 2000
& Root-cause mapping from CVE description to a single CWE identifier, e.g., \texttt{CWE-\#\#\#\#}. \\

VSP~\citep{alam2025athenabench}
& 2000
& Vulnerability severity prediction; output a full CVSS v3.1 base vector, e.g., \texttt{CVSS:3.1/AV:N/...}. \\

ATE~\citep{alam2025athenabench}
& 500
& ATT\&CK technique extraction from an attack scenario; output a single technique or sub-technique ID, e.g., \texttt{T\#\#\#\#} or \texttt{T\#\#\#\#.\#\#\#}. \\

RMS~\citep{alam2025athenabench}
& 500
& Risk mitigation recommendation; output a set of ATT\&CK mitigation IDs, e.g., \texttt{M10xx}. \\

ElasticRule~\citep{elastic_detection_rules}
& 432
& Detection-rule mapping; map an Elastic rule and metadata to a single ATT\&CK technique ID. \\

APTNER~\citep{wang2022aptner}
& 1505
& APT-focused named-entity recognition; output a JSON object mapping entity types to extracted spans. \\

LANCE~\citep{froudakis2025revealing}
& 466
& Indicator-of-compromise identification over candidate IPs, URLs, domains, and hashes; output candidate-level IoC labels. \\

AnnoCTR~\citep{lange2024annoctr}
& 1230
& STIX-style entity and relation extraction; meta task over entity extraction, entity typing, relation existence, and relation labeling with XML-like output tags. \\

AZERG~\citep{lekssays2025text}
& 1333
& STIX-style entity and relation extraction; meta task over entity extraction, entity typing, relation existence, and relation labeling with XML-like output tags. \\
\bottomrule
\end{tabular}
\end{table}
\FloatBarrier

\section{Additional Baseline Construction Details}
\label{app:additional-baselines}

This appendix describes the three additional training-method baselines reported in \Cref{tab:main-results}. All baselines use the same 32{,}000-example Minerva-CTI train split, the same task verifiers, and the same 12-task evaluation protocol as MinervaRL unless stated otherwise. Training and sampling hyperparameters for these baselines are listed with the other implementation settings in Appendix~\ref{app:training-settings}.

\subsection{STaR-CTI}
\label{app:star-cti}

STaR-CTI adapts STaR~\citep{NEURIPS2022_639a9a17} to verifier-scored CTI tasks. For each training example $(x,y^\star)$ in a round, we generate one \emph{original} trace from the original prompt $x$ and one \emph{rationalization} trace from a prompt that additionally reveals the gold answer $y^\star$. Both traces are scored by the Minerva verifier. We keep the original trace if it is verifier-correct; otherwise, we keep the rationalization trace if it is verifier-correct; otherwise, the example contributes no trace in that round.

The selected traces form that round's SFT dataset. We train a fresh copy of the base model for each round rather than continuing from the previous SFT checkpoint. The resulting checkpoint is then used as the generator for the next round. Checkpoints are selected with the same validation criterion used for the RL baselines: the average of Minerva-Dev and AthenaBench-Mini validation performance. The validation suite contains 1{,}200 Minerva-Dev examples and 950 Athena CTI examples covering ATE, CKT, RCM, RMS, TAA, and VSP.

\subsection{DART-CTI}
\label{app:dart-cti}

DART-CTI adapts DART-Math~\citep{NEURIPS2024_0ef1afa0} as a fixed rejection-finetuning baseline. We build a teacher-generated trace corpus once, then train each student model with SFT on that fixed corpus. The target corpus contains two accepted traces per Minerva-CTI training prompt, for 64{,}000 traces total.

The final corpus is built in four stages. First, we run plain rejection sampling from the original prompt, with a target of two accepted traces per prompt and a cap of 32 attempts per prompt. This produces 16{,}807 accepted plain traces from 880{,}782 generated attempts. Second, for prompts still missing traces, we use answer-guided generation with the same ACR-style prompt used by MinervaRL and accept traces that pass the verifier plus the heuristic/TextCNN filters; this adds 38{,}868 traces. Third, for the remaining hard tail, we keep answer-guided generation but disable the heuristic/TextCNN filters and require only verifier success; this adds 8{,}296 traces. Finally, 29 remaining missing traces are filled with a deterministic boxed gold-answer completion that is still checked by the verifier. The final training corpus therefore contains 16{,}807 plain traces, 47{,}164 guided-fill traces, and 29 direct-answer tail traces.

\subsection{LUFFY-CTI}
\label{app:luffy-cti}

LUFFY-CTI adapts LUFFY~\citep{yan2025luffy}, an off-policy RLVR method that uses precomputed correct traces as off-policy guidance during RL training. We derive the off-policy guidance corpus from the DART-CTI trace corpus by selecting one accepted trace per training prompt, giving a 32{,}000-trace guidance set aligned with the Minerva-CTI train split.

We run LUFFY-CTI on the same four backbones used for GRPO and MinervaRL. This baseline is useful because it tests whether an off-policy trace-guided RL method closes the reward-sparsity gap without MinervaRL's online answer-conditioned rationale generation and periodic distillation. A practical distinction is that LUFFY-CTI requires the guidance corpus before RL begins: in our setup, constructing the 32{,}000-prompt trace corpus required 1{,}017{,}847 teacher attempts, or 31.8 attempts per prompt on average. By contrast, under the default MinervaRL schedule each prompt is seen about twice on average, with eight on-policy rollouts per visit and up to four additional ACR generations only for hard prompts, giving an upper bound of 24 generations per prompt across the full RL process.

\section{Response-Quality Evaluation Details}
\label{app:response-quality-eval}

This appendix describes the response-quality evaluation summarized in \Cref{fig:cti-judge-preference}. The goal is to complement verifier-based task metrics with a separate assessment of analyst-facing response quality, including readability, evidence use, and CTI concept precision.

\subsection{Evaluation Scope and Rubric}
\label{app:response-quality-scope}

For each backbone family, we evaluate a shared set of 350 prompts: 50 prompts each from CKT, CyberMetric, RCM, VSP, ATE, RMS, and ElasticRule~\citep{elastic_detection_rules}. These tasks require justification or nontrivial CTI reasoning, making them suitable for prose-quality assessment. We exclude schema-heavy extraction and tagging tasks because their responses are dominated by format compliance and label recovery rather than explanatory quality.

Within each backbone family, we compare five variants: the base model, GRPO, MinervaRL, STaR-CTI, and DART-CTI. Model identities are hidden from the judge. GPT-5.2 scores each prompt-response pair independently using the three-criterion rubric in \Cref{fig:pointwise-judge-prompt}. Each criterion is scored from 1 to 4, and the total response-quality score is the sum of writing quality, evidence use, and CTI concept use, giving a range of 3--12.

\begin{figure}[!t]
\centering
\begin{tcolorbox}[title=Pointwise Response-Quality Judge Prompt]
\small
You are an expert cyber threat intelligence (CTI) analyst evaluating the quality of a single model response.

Your job is to score the response using the rubric below.

Rubric: use a 1--4 scale for each criterion, where 1 = poor, 2 = fair, 3 = good, and 4 = excellent.

\begin{enumerate}
  \setlength{\itemsep}{2pt}
  \setlength{\parskip}{0pt}

  \item \textbf{Writing quality.} Is the response easy to read and efficiently structured?
  \begin{description}
    \setlength{\itemsep}{0pt}
    \setlength{\parskip}{0pt}
    \item[\textbf{1:}] no rationale, or hard to follow because of rambling, disorganization, or awkward phrasing.
    \item[\textbf{2:}] understandable, but noticeably wordy, repetitive, or clunky.
    \item[\textbf{3:}] clear and easy to follow, with minor issues in concision or structure.
    \item[\textbf{4:}] very clear, concise, well structured, and easy to scan.
  \end{description}

  \item \textbf{Evidence use.} Does the response justify its answer using details from the prompt?
  \begin{description}
    \setlength{\itemsep}{0pt}
    \setlength{\parskip}{0pt}
    \item[\textbf{1:}] mostly asserts the answer with little or no prompt-based support.
    \item[\textbf{2:}] uses some prompt evidence, but the justification is weak, generic, or incomplete.
    \item[\textbf{3:}] supports the main answer with relevant prompt details, with only minor gaps.
    \item[\textbf{4:}] directly justifies the answer using strong and specific prompt evidence.
  \end{description}

  \item \textbf{CTI concept use.} Does the response use the right CTI concepts correctly?
  \begin{description}
    \setlength{\itemsep}{0pt}
    \setlength{\parskip}{0pt}
    \item[\textbf{1:}] uses no relevant CTI concepts, or uses irrelevant concepts.
    \item[\textbf{2:}] uses some relevant CTI concepts, but with important mistakes or vague distinctions.
    \item[\textbf{3:}] mostly uses the right CTI concepts correctly, with minor imprecision.
    \item[\textbf{4:}] uses the right CTI concepts precisely and consistently to support the answer.
  \end{description}
\end{enumerate}

Task: \texttt{\{task\}}

Subtask: \texttt{\{subtask\}}

Prompt: \texttt{\{prompt\}}

Model Response: \texttt{\{response\}}

Return JSON only with this exact schema:
\begin{quote}
\small
\texttt{\{}\\
\texttt{"writing\_quality\_score": 1,}\\
\texttt{"evidence\_use\_score": 1,}\\
\texttt{"cti\_concept\_focus\_score": 1,}\\
\texttt{"notes": "short explanation"}\\
\texttt{\}}
\end{quote}

Rules: each score must be an integer from 1 to 4; use the rubric exactly; keep notes short and concrete; return JSON only.
\end{tcolorbox}
\caption{Prompt template and rubric used for GPT-5.2 and Claude Sonnet 4.6 pointwise response-quality scoring.}
\label{fig:pointwise-judge-prompt}
\end{figure}

\subsection{Pairwise Preference Construction}
\label{app:deriving-pairwise-preferences}

The heatmaps in \Cref{fig:cti-judge-preference} are derived from the pointwise rubric scores. For each prompt and each unordered pair of models within the same backbone family, we compare the two total response-quality scores. The model with the higher total score receives one pairwise win; if the scores are equal, the outcome is counted as a tie. Each heatmap cell reports the row model's win count divided by the number of shared prompts, with ties retained in the denominator rather than discarded.

\subsection{Human and Cross-Judge Agreement}
\label{app:human-gpt-agreement}

We validate GPT-5.2 on a 100-example subset covering the five Llama-3.1-8B-family variants. Two human annotators independently score the same blind prompt-response items using the rubric in \Cref{fig:pointwise-judge-prompt}. We also score the same subset with Claude Sonnet 4.6 as a cross-judge comparison. Agreement is measured on the total rubric score using quadratic weighted kappa (QWK) and Spearman correlation.

\begin{table}[!t]
\centering
\caption{Agreement on the response-quality validation subset. QWK denotes quadratic weighted kappa.}
\label{tab:human-gpt-agreement}
\begin{tabular}{lcc}
\toprule
Comparison & QWK & Spearman \\
\midrule
Annotator 1 vs. Annotator 2      & 0.6514 & 0.6461 \\
Annotator 1 vs. GPT-5.2          & 0.6019 & 0.7822 \\
Annotator 2 vs. GPT-5.2          & 0.5680 & 0.6440 \\
Annotator average vs. GPT-5.2    & 0.6313 & 0.7722 \\
Claude Sonnet 4.6 vs. GPT-5.2    & 0.7634 & 0.8266 \\
\bottomrule
\end{tabular}
\end{table}

GPT-5.2 agrees with the human annotator average at a level close to human--human agreement, and Claude Sonnet 4.6 shows strong agreement with GPT-5.2. These results support using GPT-5.2 as a scalable response-quality judge for the full preference study, while treating the preference heatmaps as complementary to the verifier-based task metrics.

\section{Additional Support for MinervaRL}
\label{app:minervarl-support-bridging}

\subsection{Target-Level Reward Sparsity}
\label{app:target-level-reward-sparsity}

To characterize the sparsity that motivates hardness-gated answer-conditioned rationale generation, we analyze the
DART-CTI stage-1 plain rejection-sampling traces with a maximum budget of 32 attempts per prompt.
\Cref{fig:reward-sparsity-ate-rcm} plots target-level aggregates for ATE and RCM. Question-weighted
summaries from the target-level statistics show that ATE is sparser than RCM: ATE requires
26.7 attempts per question on average, yields 0.66 verifier-successful responses, and reaches the
32-attempt cap for 73.2\% of questions, compared with 23.6 attempts, 0.86 verifier-successful
responses, and a 61.5\% cap rate for RCM. Both tasks also exhibit long target-level tails, with many
identifiers clustered near the attempt cap and below one successful response per question.

\begin{figure}[!t]
  \centering
  \includegraphics[width=\linewidth]{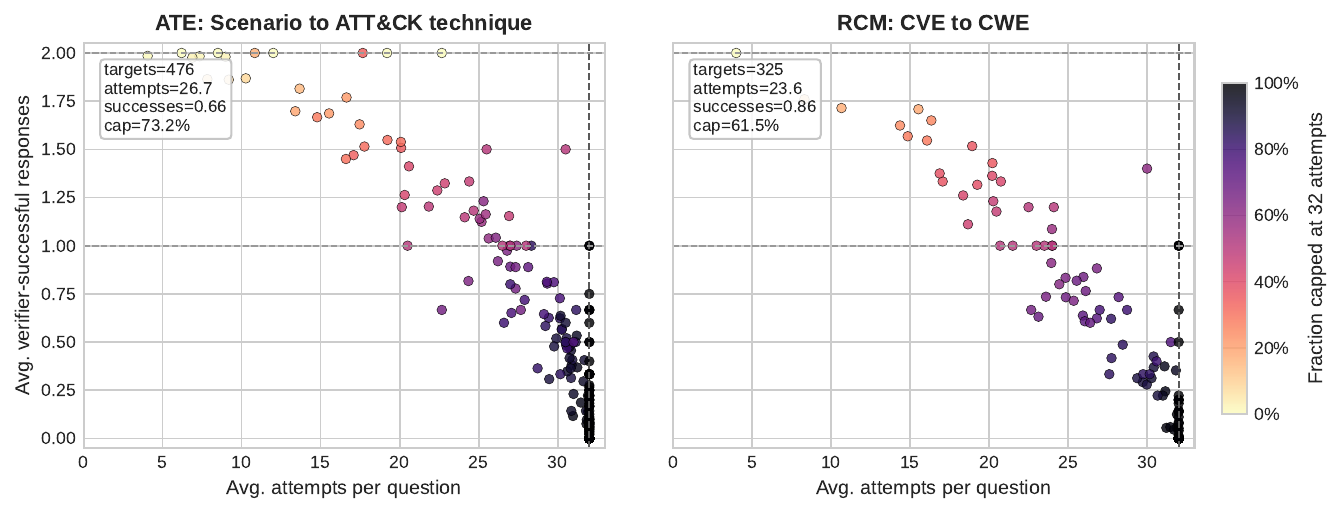}
  \caption{Target-level reward sparsity for ATE and RCM in DART-CTI stage-1 plain-generation attempts. Each point is one target identifier; the x-axis is mean attempts per question, the y-axis is mean verifier-successful responses per question, and color indicates the fraction of questions that reached the 32-attempt cap.}
  \label{fig:reward-sparsity-ate-rcm}
\end{figure}

\subsection{Why MinervaRL Can Expand Empirical Support}
\label{app:minervarl-support-formalization}

\paragraph{Answer-level view.}
Consider a training instance $(x,a^\star)$, where $x\in\mathcal{X}$ is the original prompt and
$a^\star\in\mathcal{A}$ is the ground-truth structured answer, such as an ATT\&CK technique ID,
a set of mitigation IDs, or a CVSS vector. Let $g:\mathcal{Y}\to\mathcal{A}$ denote the
task-specific extractor that maps a full completion $y\in\mathcal{Y}$ to its final extracted answer.
For this analysis, we focus on the event of full verification and write
\begin{equation}
    S(x,y;a^\star)
    :=
    \mathbb{1}\!\left[g(y)=a^\star\right].
\end{equation}
Task-specific partial credit can be viewed as additional shaping, while the support issue studied here
concerns whether the policy samples a fully verifier-correct answer. A policy $\pi_\theta(\cdot\mid x)$
therefore induces the answer-level success probability
\begin{equation}
    p_\theta(a^\star\mid x)
    :=
    \Pr_{y\sim\pi_\theta(\cdot\mid x)}
    \left[S(x,y;a^\star)=1\right].
\end{equation}

\paragraph{Finite-budget detectability.}
With a rollout budget of $k$ independent completions for the same prompt, the probability that the
rollout group contains no fully verified completion is
\begin{equation}
    \Pr[\text{no success in $k$ rollouts}]
    =
    \left(1-p_\theta(a^\star\mid x)\right)^k .
\end{equation}
For a failure tolerance $\zeta\in(0,1)$, define the exact detectability threshold
\begin{equation}
    \varepsilon_{k,\zeta}
    :=
    1-\zeta^{1/k}
    \approx
    \frac{-\log \zeta}{k}.
\end{equation}
This is the minimum answer-level probability needed to make the chance of missing the correct answer
in $k$ samples at most $\zeta$.

\begin{lemma}[Finite-sample detectability]
\label{lem:detectability}
Fix a prompt $x$, an answer $a\in\mathcal{A}$, and a rollout budget $k$. If
$p_\theta(a\mid x)\ge \varepsilon_{k,\zeta}$, then
\begin{equation}
    \Pr[\text{$a$ is not observed in $k$ rollouts}]
    \le
    \zeta .
\end{equation}
Equivalently, observing no successful rollout is a level-$\zeta$ event under any policy that assigns
probability at least $\varepsilon_{k,\zeta}$ to the correct answer.
\end{lemma}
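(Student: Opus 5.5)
The plan is a direct computation from the independence of the $k$ rollouts and the monotonicity of $t\mapsto(1-t)^k$ on $[0,1]$. Write $p:=p_\theta(a\mid x)$ for the probability that a single completion $y\sim\pi_\theta(\cdot\mid x)$ satisfies $g(y)=a$. The event ``$a$ is not observed in $k$ rollouts'' is the intersection of $k$ events $\{g(y_j)\neq a\}$, $j=1,\ldots,k$. The rollouts are sampled i.i.d.\ from $\pi_\theta(\cdot\mid x)$ (as in Step~1 of \Cref{alg:minervarl}, with $\theta$ held fixed during sampling), so this probability factorizes as $(1-p)^k$. This is exactly the no-success formula displayed before the lemma, specialized from $a^\star$ to an arbitrary answer $a$.

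Next, use the hypothesis $p\ge\varepsilon_{k,\zeta}=1-\zeta^{1/k}$, which is equivalent to $1-p\le\zeta^{1/k}$. Both sides are nonnegative: $p\le 1$ because it is a probability, and $\zeta^{1/k}>0$ because $\zeta\in(0,1)$. The map $t\mapsto t^k$ is nondecreasing on $[0,\infty)$, so raising both sides to the $k$-th power gives $(1-p)^k\le\zeta$. That is the claimed bound.

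For the ``equivalently'' clause, I would restate the bound as a test. Take the null hypothesis that the policy assigns probability at least $\varepsilon_{k,\zeta}$ to the correct answer. Under that null, the event that no rollout succeeds has probability at most $\zeta$, uniformly over all such policies. Observing all-failure groups is therefore a level-$\zeta$ event. I would also note, without proving it formally, the approximation $1-\zeta^{1/k}=1-e^{(\log\zeta)/k}\approx -\log\zeta/k$, which follows from $1-e^{-u}\approx u$ for small $u=-\log\zeta/k$.

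No step is a real obstacle. The one point that needs stating explicitly is the modeling assumption of conditional independence of the $k$ rollouts given $x$ and $\theta$. If the rollouts were correlated, for example through shared sampling state, the product formula would fail, and one would need a union or Markov-type argument instead. I would state the independence assumption up front, as the paper's displayed formula already implicitly does.
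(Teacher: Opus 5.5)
Your proposal is correct and follows the paper's proof, which is the one-line computation $(1-p_\theta(a\mid x))^k \le (\zeta^{1/k})^k = \zeta$; you make explicit what that computation uses implicitly, namely the i.i.d.\ rollouts and the monotonicity of $t\mapsto t^k$ on $[0,\infty)$. Your closing aside about correlated rollouts is off, though it does not affect the proof: no union- or Markov-type argument could recover the bound without independence, because perfectly correlated rollouts all fail together with probability $1-p$, and $1-p$ can exceed $\zeta$ (the union bound gives only the reverse inequality $\Pr[\text{no success}]\ge 1-kp$).
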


\noindent\emph{Proof.}
If $p_\theta(a\mid x)\ge 1-\zeta^{1/k}$, then
\[
    \Pr[\text{$a$ is not observed}]
    =
    (1-p_\theta(a\mid x))^k
    \le
    (\zeta^{1/k})^k
    =
    \zeta .
\]
\hfill$\square$

\paragraph{Small-budget all-zero regime.}
The detectability threshold above characterizes when a success is reliably observable. A stronger
``all-zero'' regime occurs when the answer probability is so small that even one success is unlikely.
For $\eta\in(0,1)$, define
\begin{equation}
    \delta_{k,\eta}
    :=
    1-(1-\eta)^{1/k}
    \approx
    \frac{\eta}{k}.
\end{equation}
If $p_\theta(a^\star\mid x)\le \delta_{k,\eta}$, then the probability of seeing no fully verified
completion in $k$ rollouts is at least $1-\eta$. Thus, when $p_\theta(a^\star\mid x)\ll 1/k$,
the prompt can repeatedly produce all-zero rollout groups.

\begin{theorem}[Small-budget support barrier for on-policy RLVR]
\label{thm:rlvr-barrier}
Fix a prompt $x$ and rollout budget $k$. Suppose that, at iteration $t$,
$p_t:=p_{\theta_t}(a^\star\mid x)\le \delta_{k,\eta}$. Then, with probability at least $1-\eta$,
the $k$ on-policy rollouts for $x$ contain no fully verified completion. On this event, any per-prompt
update rule whose direct positive signal for $a^\star$ comes only from observed verifier-correct
rollouts receives no direct evidence for increasing the probability of $a^\star$ in that iteration.
\end{theorem}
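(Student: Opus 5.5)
The argument has two parts. The first is a probability bound that mirrors the proof of \Cref{lem:detectability}. The second is a structural observation about what the update rule can see.

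\textbf{Probability bound.} Fix the iteration $t$ and condition on $\theta_t$. Given $\theta_t$, the $k$ on-policy rollouts $y_1,\dots,y_k$ are i.i.d.\ draws from $\pi_{\theta_t}(\cdot\mid x)$. The indicators $S(x,y_j;a^\star)$ are therefore i.i.d.\ Bernoulli($p_t$), and
\[
\Pr[\text{no success in $k$ rollouts}\mid\theta_t]=(1-p_t)^k .
\]
The map $p\mapsto(1-p)^k$ is nonincreasing on $[0,1]$. Together with the hypothesis $p_t\le\delta_{k,\eta}=1-(1-\eta)^{1/k}$, this gives
\[
(1-p_t)^k\ge\bigl((1-\eta)^{1/k}\bigr)^k=1-\eta .
\]
This is exactly the reverse of the inequality used in \Cref{lem:detectability}. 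If the hypothesis is intended to hold only on the realized $\theta_t$, the bound holds conditionally on that event, and hence also unconditionally on it. I will note the approximation $\delta_{k,\eta}\approx\eta/k$ only as a remark, since it plays no role in the proof.

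\textbf{No direct evidence on the all-zero event.} Let $E$ denote the all-zero event. I will formalize ``any per-prompt update rule whose direct positive signal for $a^\star$ comes only from observed verifier-correct rollouts'' as follows. The component of the update that raises $\log\pi_\theta(y\mid x)$ for completions with $g(y)=a^\star$ is a sum over observed rollouts $j$ with $S(x,y_j;a^\star)=1$ (possibly weighted). On $E$ this index set is empty, so the sum vanishes. It follows that no term of the update targets $a^\star$ directly.

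As a sanity check I will instantiate this for GRPO. If all rewards are equal, for instance all zero, then every $\hat A_i=0$ by the group-normalized advantage formula. In that case the surrogate in \eqref{eq:grpo-objective} has zero gradient for this prompt. If some partial credits are nonzero, any positive advantages attach only to completions with $g(y_i)\neq a^\star$. Either way, the update contains no term that pushes toward $a^\star$.

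\textbf{Main obstacle.} The difficulty is stating the second claim rigorously without overclaiming. Indirect effects remain possible: updates from other prompts, shared parameters, negative advantages that push mass away from wrong answers, or KL and entropy terms could all incidentally increase $p_\theta(a^\star\mid x)$. I will handle this by making the claim definitional. ``Direct evidence'' refers only to terms indexed by verifier-correct rollouts of $x$, and I will state explicitly that indirect changes to $p_\theta(a^\star\mid x)$ are not excluded. With that framing, the second part is immediate once $E$ is established, and the theorem follows by combining it with the bound $\Pr[E]\ge 1-\eta$.
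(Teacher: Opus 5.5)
Your proposal is correct and follows essentially the same route as the paper's proof sketch. Both derive $(1-p_t)^k\ge 1-\eta$ from $p_t\le\delta_{k,\eta}=1-(1-\eta)^{1/k}$, then observe that on the all-zero event an update driven only by observed verifier-correct rollouts has no positive example of $a^\star$; your conditioning on $\theta_t$, the GRPO zero-advantage check, and the explicit exclusion of indirect effects make the paper's informal second part more precise without changing the argument.
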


\noindent\emph{Proof sketch.}
The probability of no fully verified rollout is $(1-p_t)^k$. Since
$p_t\le 1-(1-\eta)^{1/k}$, we have $(1-p_t)^k\ge 1-\eta$. Conditioning on this event, all sampled
trajectories fail to reveal the correct answer, so an on-policy update that depends only on observed
verified successes has no per-prompt positive example of $a^\star$ to reinforce. \hfill$\square$

\paragraph{MinervaRL as support seeding.}
MinervaRL adds an auxiliary mechanism for prompts that fall into this sparse-support regime. When the
base rollouts for a prompt $x$ contain no fully verified completion, MinervaRL constructs an
answer-conditioned prompt
\[
    \tilde{x}=\mathrm{ACR}(x,a^\star,\mathrm{ref}(a^\star)),
\]
which reveals the gold answer and optionally includes a truncated canonical reference. The EMA teacher
samples ACR candidates from $\pi_\phi(\cdot\mid \tilde{x})$. A candidate is accepted only if it is
verifier-correct under the original task target and passes the leakage and quality filters. The accepted
trace is then distilled onto the original prompt $x$, not onto $\tilde{x}$.

We formalize this mechanism with two assumptions. The first states that answer-conditioned generation
can produce an accepted verified trace with nonzero probability; the second states that distilling such a
trace increases the actor's probability of producing the correct answer from the original prompt.

\begin{assumption}[Answer-conditioned exposure]
\label{ass:exposure}
For a hard instance $(x,a^\star)$, before the prompt reaches the detectability threshold
$\varepsilon_{k,\zeta}$, each ACR generation cycle has conditional probability at least $\alpha>0$
of producing an accepted trace $y_{\mathrm{acr}}$ such that
\begin{equation}
    S(x,y_{\mathrm{acr}};a^\star)=1 .
\end{equation}
\end{assumption}

\begin{assumption}[Effective original-prompt distillation]
\label{ass:distill}
Let $p_\theta(a^\star\mid x)>0$. Whenever MinervaRL performs an SFT update on an accepted pair
$(x,y_{\mathrm{acr}})$ with $S(x,y_{\mathrm{acr}};a^\star)=1$, the induced success probability under
the original prompt increases by at least $\Delta>0$ in log-space until the detectability threshold is
reached:
\begin{equation}
    \log p_{\theta^+}(a^\star\mid x)
    \ge
    \min\!\left\{
        \log \varepsilon_{k,\zeta},\,
        \log p_\theta(a^\star\mid x)+\Delta
    \right\}.
\end{equation}
\end{assumption}

\begin{theorem}[Empirical support seeding via MinervaRL]
\label{thm:minervarl-bridging}
Fix a prompt-answer pair $(x,a^\star)$, rollout budget $k$, and failure tolerance $\zeta$. Let
$\varepsilon_{k,\zeta}=1-\zeta^{1/k}$ and let
$p_0=p_{\theta_0}(a^\star\mid x)>0$. Under
Assumptions~\ref{ass:exposure} and~\ref{ass:distill}, MinervaRL raises
$p_\theta(a^\star\mid x)$ to at least $\varepsilon_{k,\zeta}$ in a finite expected number of
ACR-generation cycles. In particular, after
\begin{equation}
    L
    =
    \left\lceil
    \frac{\left[\log \varepsilon_{k,\zeta}-\log p_0\right]_+}{\Delta}
    \right\rceil
\end{equation}
successful original-prompt distillation updates, we have
\begin{equation}
    p_\theta(a^\star\mid x)\ge \varepsilon_{k,\zeta}.
\end{equation}
The expected number of ACR-generation cycles needed to obtain these $L$ accepted traces is at most
$L/\alpha$. Consequently, once this threshold is reached, the probability that standard RLVR still
misses the correct answer in $k$ rollouts is at most $\zeta$.
\end{theorem}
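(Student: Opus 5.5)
The proof splits into three parts: a deterministic potential-function argument in log-space, a stopping-time bound on the number of ACR cycles, and a final appeal to Lemma~\ref{lem:detectability}.

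\textbf{Log-space potential.} Write $\ell_j := \log p_{\theta_j}(a^\star\mid x)$ for the log success probability after the $j$-th successful original-prompt distillation update on an accepted pair. Set $\ell_0=\log p_0>-\infty$, which is finite because $p_0>0$. Assumption~\ref{ass:distill} gives, for every $j$, the one-step recursion
\[
\ell_{j+1}\ge \min\{\log\varepsilon_{k,\zeta},\,\ell_j+\Delta\}.
\]
We show by induction that
\[
\ell_j\ge \min\{\log\varepsilon_{k,\zeta},\,\ell_0+j\Delta\}.
\]
In the inductive step, both branches of the minimum are monotone in $\ell_j$. If $\ell_j\ge\log\varepsilon_{k,\zeta}$, the recursion keeps $\ell_{j+1}\ge\log\varepsilon_{k,\zeta}$. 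Otherwise $\ell_j\ge \ell_0+j\Delta$, so $\ell_{j+1}\ge\min\{\log\varepsilon_{k,\zeta},\ell_0+(j+1)\Delta\}$.

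Take $j=L=\lceil[\log\varepsilon_{k,\zeta}-\log p_0]_+/\Delta\rceil$. Then $\ell_0+L\Delta\ge\log\varepsilon_{k,\zeta}$, so $p_\theta(a^\star\mid x)\ge\varepsilon_{k,\zeta}$. When $p_0\ge\varepsilon_{k,\zeta}$ we have $L=0$ and there is nothing to prove.

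\textbf{Counting ACR cycles.} By Assumption~\ref{ass:exposure}, while the threshold is not yet reached, each ACR cycle yields an accepted verified trace with conditional probability at least $\alpha$, given the past. Each accepted trace is then distilled.

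Let $T$ be the number of cycles until $L$ accepted traces have been produced, stopping early if the threshold is reached. Then $T$ is stochastically dominated by a sum of $L$ independent Geometric($\alpha$) variables. To make this rigorous, write $T=\sum_{i=1}^{L}T_i$, where $T_i$ is the waiting time between the $(i-1)$-th and $i$-th acceptances. The conditional lower bound $\alpha$ gives $\Pr[T_i>m\mid\mathcal F]\le(1-\alpha)^m$, hence $\mathbb E[T_i]\le 1/\alpha$. Linearity then gives $\mathbb E[T]\le L/\alpha<\infty$. Equivalently, one can apply Wald's identity to the martingale $\sum_{c\le n}(\mathbb 1[\text{accept}_c]-\alpha)$, which is a submartingale, together with optional stopping at the bounded truncation $T\wedge n$ and then monotone convergence.

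\textbf{Conclusion.} Once $p_\theta(a^\star\mid x)\ge\varepsilon_{k,\zeta}$, Lemma~\ref{lem:detectability} applied with $a=a^\star$ gives a miss probability of at most $\zeta$ over $k$ rollouts.

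\textbf{Main obstacle.} The delicate point is interleaving. Between distillation updates, GRPO steps and rejected cycles may move $\theta$ and could in principle decrease $p_\theta(a^\star\mid x)$. The induction above only tracks changes at distillation updates. I would resolve this by reading Assumption~\ref{ass:distill} as a statement about the net change in $p_\theta$ from one accepted update to the next, including any intervening GRPO steps; equivalently, the intervening steps are assumed not to decrease $p_\theta$. I would state this reading explicitly in the proof.

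A related subtlety is that the threshold may be crossed by GRPO alone. That case only shortens $T$, so it is harmless for the bound.
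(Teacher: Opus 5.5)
Your proof is correct and follows the same three-step route as the paper's sketch: Assumption~\ref{ass:distill} gives log-space increments of $\Delta$, so $L$ accepted distillation updates suffice; Assumption~\ref{ass:exposure} gives the waiting-time bound $L/\alpha$; and Lemma~\ref{lem:detectability} gives the final miss probability of at most $\zeta$. Your explicit induction, your stopping-time argument, and your handling of the interleaved GRPO steps add rigor without changing the approach; on the last point, the paper's sketch silently assumes those steps do not decrease $p_\theta(a^\star\mid x)$ between distillation updates.
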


\noindent\emph{Proof sketch.}
By Assumption~\ref{ass:distill}, each successful distillation update increases
$\log p_\theta(a^\star\mid x)$ by at least $\Delta$ until the threshold
$\varepsilon_{k,\zeta}$ is reached. Therefore $L$ accepted distillation updates suffice. By
Assumption~\ref{ass:exposure}, each ACR-generation cycle produces an accepted trace with conditional probability
at least $\alpha$, so the expected number of cycles required to obtain $L$ accepted traces is at most
$L/\alpha$. Finally, once $p_\theta(a^\star\mid x)\ge\varepsilon_{k,\zeta}$,
Lemma~\ref{lem:detectability} implies that the probability of missing the correct answer in $k$ standard
RLVR rollouts is at most $\zeta$. \hfill$\square$

\paragraph{Scope and limitations.}
This argument is intentionally narrow. It explains how MinervaRL can reduce the incidence of
zero-success rollout groups under a fixed, small rollout budget by seeding verified traces and distilling
them onto the original prompt. It does not claim that ACR traces are faithful proofs, that every accepted
trace improves general reasoning, or that answer conditioning is sufficient in domains where knowing the
final answer does not help construct a valid derivation.

\section{Training Hyperparameters and Implementation Details}
\label{app:training-settings}

This appendix lists the training hyperparameters used for RLVR (GRPO), MinervaRL, and the controlled training baselines. We report only settings that directly affect optimization, sampling, or sequence truncation.

\subsection{RLVR (GRPO) settings}
\begin{itemize}
  \item \textbf{Optimizer:} GRPO with actor learning rate $1\times 10^{-6}$.
  \item \textbf{Batching:} 128 prompts per training step.
  \item \textbf{Rollouts:} $N=8$ sampled completions per prompt per step.
  \item \textbf{Sequence lengths:} max prompt length 2048 tokens; max response length 1024 tokens.
  \item \textbf{Schedule:} 500 training steps.
\end{itemize}

\subsection{MinervaRL (ACR + distillation) settings}
\begin{itemize}
  \item \textbf{Hard-example criterion:} mark a prompt ``hard'' if the best base-rollout reward is $<1.0$ (CVSS prompts excluded).
  \item \textbf{ACR prompt context:} max ACR prompt length 4096 tokens; max response length 1024 tokens.
  \item \textbf{ACR sampling:} $K=4$ traces per ACR prompt; temperature $0.7$; nucleus sampling $p=0.9$.
  \item \textbf{Deferred generation cadence:} generate/distill every $I=10$ steps.
  \item \textbf{Teacher:} EMA teacher with decay $\alpha=0.995$.
  \item \textbf{Distillation:} supervised fine-tuning on original prompts using up to 256 accepted traces per distillation interval; learning rate is scaled by $\gamma=0.05$ relative to the RLVR learning rate.
  \item \textbf{Trace filtering:} a two-stage pipeline (heuristics + ML filter) is applied before distillation; the ML filter acceptance threshold is $\tau_q=0.5$.
\end{itemize}

\subsection{Controlled baseline settings}
\label{app:controlled-baseline-settings}

\paragraph{STaR-CTI.}
\begin{itemize}
  \item \textbf{Trace generation:} maximum generation length 2048 tokens; temperature 0.7; top-$p=0.95$.
  \item \textbf{Trace selection:} one original trace and one rationalization trace per training example in each round; verifier success threshold 1.0.
  \item \textbf{SFT training:} one epoch per round; train batch size 128; maximum sequence length 3072 tokens; bfloat16 FSDP; gradient checkpointing.
  \item \textbf{Optimizer:} AdamW with learning rate $1\times10^{-5}$, betas $(0.9,0.95)$, weight decay 0.01, gradient clipping 1.0, and cosine scheduling with warmup ratio 0.1.
  \item \textbf{Micro-batching:} micro-batch size 8 for Llama-3.1-8B, Llama-3.2-3B, and Qwen3-8B; micro-batch size 4 for Qwen3-4B.
\end{itemize}

\paragraph{DART-CTI.}
\begin{itemize}
  \item \textbf{Corpus generation:} Llama-3.1-8B-Instruct teacher; maximum generation length 1024 tokens; temperature 0.7; top-$p=0.95$; verifier threshold 1.0.
  \item \textbf{Rejection sampling:} plain stage targets two accepted traces per prompt with a cap of 32 attempts.
  \item \textbf{Guided fill:} maximum prompt length 4096 tokens; at most 8096 label-reference characters.
  \item \textbf{Filtering:} when enabled, use the response-only TextCNN filter with threshold 0.5, filter batch size 128, and filter maximum length 1024 tokens.
  \item \textbf{Student SFT training:} one epoch; train batch size 128; micro-batch size 8; maximum sequence length 4096 tokens; bfloat16 FSDP; gradient checkpointing.
  \item \textbf{Optimizer and selection:} AdamW with learning rate $1\times10^{-5}$, betas $(0.9,0.95)$, weight decay 0.01, gradient clipping 1.0, and cosine scheduling with warmup ratio 0.1; checkpoints selected by synthetic-validation loss.
\end{itemize}

\paragraph{LUFFY-CTI.}
\begin{itemize}
  \item \textbf{Guidance corpus:} 32{,}000-row one-trace-per-prompt off-policy corpus derived from DART-CTI.
  \item \textbf{RL training:} GRPO advantages with actor learning rate $1\times10^{-6}$; train batch size 128; PPO mini-batch size 128; PPO micro-batch size 16; 500 training steps.
  \item \textbf{Rollout sampling:} $N=8$ completions per prompt; maximum prompt length 2048 tokens; maximum response length 1024 tokens; rollout temperature 1.0; validation temperature 0.6.
  \item \textbf{Distributed training:} gradient checkpointing, dynamic batching, FSDP, maximum actor token length 32768 per GPU, and four GPUs per run.
  \item \textbf{Off-policy objective:} token-level off-policy loss, no off-policy normalization, \texttt{p\_div\_p\_0.1} reshape setting, KL loss disabled, KL coefficient 0, entropy coefficient 0, and reference model disabled.
  \item \textbf{Prefix guidance:} one random prefix per prompt with maximum prefix length 1024 tokens and min/max prefix ratio 1.0.
\end{itemize}

\subsection{Timing overhead}
\label{app:timing-overhead}

To isolate per-step overhead, we ran a small-scale timing study on Llama-3.1-8B-Instruct for 10 training steps with validation disabled and one ACR/SFT interleave at step 10.
Including end-to-end process time, GRPO took 469.8s and MinervaRL took 654.5s, a 39.3\% increase.
The measured MinervaRL components were: GRPO rollout 110.3s, GRPO reward 2.3s, ACR prompt construction 2.9s, ACR generation 36.4s, ACR reward 5.9s, and SFT distillation 9.3s.
The remaining gap comes mainly from ACR log-prob computation (32.1s) and a modest increase in the shared actor update (+16.9s relative to GRPO).
Thus, MinervaRL is not strictly Pareto-dominant at equal step count, but \Cref{fig:validation-accuracy-dynamics} shows that the additional cost can be favorable in time-to-performance because MinervaRL reaches the best GRPO validation accuracy earlier and continues improving.

\end{document}